\title{Symmetric Linear Bandits with Hidden Symmetry}
\author{%
  Nam Phuong Tran \\
  Department of Computer Science\\
  University of Warwick\\
  Coventry, United Kingdom\\
  \texttt{nam.p.tran@warwick.ac.uk} \\
   \And
   The Anh Ta \\
   CSIRO's Data61\\
   Marsfield, NSW, Australia \\
    \texttt{ theanh.ta@csiro.au}\\
   \And
   Debmalya Mandal\\
   Department of Computer Science\\
   University of Warwick\\
   Coventry, United Kingdom\\
   \texttt{debmalya.mandal@warwick.ac.uk}\\
   \And
   Long Tran-Thanh\\
   Department of Computer Science\\
   University of Warwick\\
   Coventry, United Kingdom\\
   \texttt{long.tran-thanh@warwick.ac.uk}
}
\begin{document}

\maketitle
\addtocontents{toc}{\protect\setcounter{tocdepth}{0}}

\begin{abstract}
    High-dimensional linear bandits with low-dimensional structure have received considerable attention in recent studies due to their practical significance.
    The most common structure in the literature is sparsity. However, it may not be available in practice. 
    Symmetry, where the reward is invariant under certain groups of transformations on the set of arms, is another important inductive bias in the high-dimensional case that covers many standard structures, including sparsity.
    In this work, we study high-dimensional symmetric linear bandits where the symmetry is hidden from the learner, and the correct symmetry needs to be learned in an online setting.
    We examine the structure of a collection of hidden symmetry and provide a method based on model selection within the collection of low-dimensional subspaces.
    Our algorithm achieves a regret bound of $ O(\DimensionTrueFSS^{2/3} T^{2/3} \log(d))$, where $d$ is the ambient dimension which is potentially very large, and $\DimensionTrueFSS$ is the dimension of the true low-dimensional subspace such that $\DimensionTrueFSS \ll d$. 
    With an extra assumption on well-separated models, we can further improve the regret to $ O(\DimensionTrueFSS \sqrt{T\log(d)} )$. 
\end{abstract}

\section{Introduction}

Stochastic bandit is a sequential decision-making problem
in which a player, who aims to maximize her reward, selects an action at each step and receives a stochastic reward, drawn from an initially unknown distribution of the selected arm, in response.
Linear stochastic bandit (LSB)~\citep{Abbasi2011_OFUL} is an important variant in which the expected value of the reward is a linear function of the action. It is one of the most studied bandit variants and has many practical applications \citep{Lattimore2020_BanditBook}.
%

Actions in LSB are specified as feature vectors in $\mathbb{R}^d$ for very large feature dimension $d$, with performance i.e. the resulting regret scaling with $d$. Many works have addressed this curse of dimensionality by leveraging different low-dimensional structures as inductive biases for the learner. 
For example, sparsity, which assumes that the reward is a sparse linear function, has been used extensively in LSB to design bandit algorithms with better performance~\citep{Yasin_OnlineToConfidentSetSparseBandit,Oh2020_AgnositcSparseLinearBandit,Hao2020_SparseLinBandit_PoorRegime}. 
However, when the reward function lacks the structure for sparsity (which may occur in many real-world situations), a question arises: 
Are there different structures in the features of LSB that we can exploit to overcome the curse of dimensionality and design  bandit algorithms with better performance?

In this paper, we study the inductive bias induced by symmetry structures in LSB, which is a more general model inductive bias than sparsity, and can facilitate efficient and effective learning~\citep{bronstein2021geometric}.
Symmetry describes how, under certain transformations of the input of the problem, the outcome should either remain unchanged (\emph{invariance}) or shift predictably (\emph{equivariance}).
In supervised learning, it has been empirically observed \citep{Finzi2020_SymmetrySLPratice, Cesa2022_SymmetrySLPratice} and theoretically proven \citep{Elesedy2021_SymmetricAntiSymmetricDecomposition, Behboodi2022_SymmetrySupversiedLearning} that explicitly integrating symmetry into models leads to improved generalization error. 
However, in the literature on sequential decision-making, unlike sparsity, symmetry is rarely considered to date.
This leads us to the following research question: Can one leverage \emph{symmetry in sequential decision-making} tasks to enable effective exploration, and eventually break the curse of dimensionality?

In the machine learning literature, especially in supervised learning, most studies on symmetry assume prior knowledge of symmetry structures of the tasks under consideration \citep{Li2019_DataAugmentation,Elesedy2021_SymmetricAntiSymmetricDecomposition}. 
However, in numerous practical scenarios, the learner can only access to partial knowledge of the symmetry, necessitating the incorporation of symmetry learning mechanisms into the algorithms to achieve better performance. 
Examples of hidden symmetry can be found in multi-agent learning with cooperative behavior.
As a motivating example, consider a company undertaking a large project that consists of several subtasks. 
The company must hire subcontractors with the goal of maximizing project quality while staying within budget constraints. 
Symmetry may arise in this situation when coalitions form among subcontractors, where members of a coalition work together to complete their allocated tasks using shared resources. 
In particular, the allocation of tasks within a coalition can be swapped without affecting overall team performance, inducing symmetry (i.e., performance remains invariant under permutation) in the task assignments.
Coalitions among subcontractors often arise since sharing labor and resources reduces operational costs, making their work more efficient and cost-effective. 
However, these coalitions are typically \textit{hidden} from the hiring company. 
One reason is that if the hiring company were aware of these collaborations, they could use this information to negotiate lower prices, knowing that the subcontractors are benefiting from shared resources. 
Another reason is that coalitions may raise concerns about collusion. 
In particular, in a competitive market, such as when subcontractors are hired through a bidding platform, coalition members can collaborate to manipulate the bidding process, which is considered unfair and could undermine the integrity of the bidding.
For more practical examples of hidden symmetry in multi-agent reinforcement learning \cite{Mondal2022_MeanFieldGamesClassHomogeneous} and robotics \cite{Abreu2023_UnknownSymmetryRobotic}, we refer the reader to Appendix \ref{appendix: Practical Examples of Partitions with Subexponential-Size}.
Motivated by these examples, we believe that hidden symmetry is much more relevant in the context of sequential decision-making because the environment and its symmetry structure may not be readily available to the learner, as opposed to supervised learning and offline settings where data are provided during the training phase.
As the learner has the power to freely collect data, it is expected that they will learn the hidden symmetry structures as they explore the environment.

Against this background, we ask the question of whether learner can leverage symmetry to enable effective exploration, and break the curse of dimensionality \emph{without a  prior knowledge} of the symmetry structure? Moreover, in the presence of symmetry, when can we design learning algorithms with optimal regret bounds?
Towards answering this question, we investigate the setting of symmetric  linear stochastic bandit in $\mathbb{R}^d$, where $d$ is potentially very large, and the expected reward function is invariant with respect to the actions of a \textit{hidden} group $\mcal G$ of coordinate permutations.
Our contributions are summarised as follows:
\begin{enumerate}
   \item We first give an impossibility result that \emph{no algorithm can get any benefit by solely knowing that $\mcal G$ is a subgroup of permutation matrices}.
    We achieve this by formally establishing a relation between the class of subgroup to the partition over the set $\{1,...,d\}$.
    A direct implication of this impossibility result is that it is necessary to have further information about the structure of the hidden subgroup in order to achieve improved regret bounds.
    \item Given this, we establish a cardinality condition on the class of symmetric linear bandits with hidden $\mathcal{G}$, in which the learner can learn the true symmetry structure and overcome the curse of dimensionality. 
    Notably, this class includes a sparsity as a special case, and therefore inherits all the computational and statistical complexities of sparsity.
    Apart from sparsity, this class includes many other practically relevant classes, such as partitions that respect underlying hierarchies, non-crossing partitions, and non-nesting partitions (see Subsection \ref{subsubsec: assumptions} and Appendix \ref{appendix: Practical Examples of Partitions with Subexponential-Size}).
    \item We cast our problem of learning with hidden subgroup $\mcal G$ into model selection with collection of low-dimensional subspaces \cite{Kassraie2023_FeatureSelection_LinBan_LogM_Sparsity, Moradipari2021_LinearBanditRepresentationLearning}.
To address the polynomial scaling of regret bounds with respect to the number of models and arms in previous works, 
we depart from model aggregation, which is typically used in LSB model selection, and introduce a new framework inspired by Gaussian model selection \cite{Lucien_GaussianModelSelection}
\footnote{We note that \say{Gaussian model selection} is a technique in statistics, similar to model aggregation (see \cite{Giraud2021_HighDimStatBook}'s chapter 2 and 4), which should not be confused with \say{model selection} in the bandit literature.}
and compressed sensing~\cite{Blumensath2009_RIP_Concentration_UoS}. 
Based on this framework, we introduce a new algorithm, called EMC (for Explore Models then Commit).
Under the assumption that the set of arm is exploratory, we prove that the regret bound of the EMC algorithm is $O(d_0^{2/3}T^{2/3}\log(d))$, and $O(d_0\sqrt{T}\log(d))$ with an additional assumption on well-separated partitions, where $d_0 \ll d$ is the dimension of the low-dimensional subspace associated with group $\mcal G$.
\end{enumerate}
To the best of our knowledge, our work is the first in the linear stochastic bandits literature that leverages symmetry in designing provably efficient algorithms.
To save space, all proofs in this paper are deferred to the Appendix.

\color{black}


\subsection{Related Work}
We now briefly outline related work and compare them with our results. We refer the reader to Appendix \ref{appendix: Extended Related Work} for a more in-depth literature review.

\textbf{Sparse linear bandits.} As we will explain in Section \ref{sec: Regret Analysis of EMC algorithm}, sparsity is equivalent to a subset symmetry structures, and thus, can be seen as a special case of our setting. As such, we first review the literature of sparsity.
Sparse linear bandits were first investigated in \cite{Yasin_OnlineToConfidentSetSparseBandit}, where the authors achieve a regret of $\tilde{O}(\sqrt{dsT})$, with $\tilde{O}$ disregarding the logarithmic factor, and $s$ representing the sparsity level, and $T$ is the time horizon.
This matches the regret lower bound for sparse bandits, which is $\Omega(\sqrt{dsT})$ \citep{Lattimore2020_BanditBook}. 
 More recently, the contextual version of linear bandits has gained popularity, where additional assumptions are made regarding the context distribution and set of arms \citep{Kim2019_SparseLinBanditcompatibility, Oh2020_AgnositcSparseLinearBandit,Lattimore2015_SparseLinBanditCube, Carpentier2012_SparseLinBanditonSphere, Hao2020_SparseLinBandit_PoorRegime} to avoid polynomial dependence on $d$.
Notably, with the assumption on exploratory set of arms, \cite{Hao2020_SparseLinBandit_PoorRegime} propose an Explore then Commit style strategy that achieves $\Tilde O(s^{\frac{2}{3}}T^{\frac{2}{3}})$, nearly matching the regret lower bound $\Omega(s^{\frac{2}{3}}T^{\frac{2}{3}})$ \cite{Kyoungseok2022_PopArtSparseBandit} in the data-poor regime.
As sparsity is equivalent to a subclass of hidden symmetry, all the lower bounds for sparse problems apply to our setting of learning with hidden symmetry.

\textbf{Model selection.} Our problem is also closely related to the problem of model selection in linear bandits, as the learner can collect potential candidates for the hidden symmetry model.
Particularly, in model selection, there is a collection of $M$ features, and different linear bandits running with each of these features serve as base algorithms.
By exploiting the fact that the data can be shared across all the base algorithms, the dependence of regret in terms of the number of features can be reduced to $\log(M)$.
In particular, \cite{Kassraie2023_FeatureSelection_LinBan_LogM_Sparsity} propose a method that concatenates all $M$ features of dimension $d$ into one feature of dimension $Md$, and uses the Lasso estimation as a aggregation of models. 
Their algorithm achieves a regret bound of $O(T^{\frac{3}{4}} \sqrt{\log(M)})$ under the assumption that the Euclidean norm of the concatenated feature is bounded by a constant.
However, in our case, the Euclidean norm of the concatenated feature vector can be as large as $\sqrt{M}$, which leads to a $\sqrt{M}$ multiplicative factor in the regret bound.
Besides, \cite{Moradipari2021_LinearBanditRepresentationLearning} uses the online aggregation oracle approach, and is able to obtain regret of $O(\sqrt{KdT\log(M)})$, where $K$ is the number of arms.
In contrast, \emph{we use algorithmic mechanisms that are different from aggregation of models}. 
In particular, we explicitly exploit the structure of the model class as a collection of subspaces and invoke results from Gaussian model selection \citep{Giraud2021_HighDimStatBook, Lucien_GaussianModelSelection} and dimension reduction on the union of subspaces \citep{Blumensath2009_RIP_Concentration_UoS}.
With this technique, we are able to achieve $O(T^{\frac{2}{3}}\log(M))$, which is rate-optimal in the data-poor regime, has logarithmic dependence on $M$ without strong assumptions on the norm of concatenated features, and is independent of the number of arms $K$.
We refer the reader to Section \ref{sec: Regret Analysis of EMC algorithm} for a more detailed explanation.

\textbf{Symmetry in online learning.}  The notion of symmetry in Markov decision process dates back to works such as \cite{Givan2003_MDPBisimulation, Ravindran2004_MDPHomomorphism}. 
Generally, the reward function and probability transition are preserved under an action of a group on the state-action space. 
Exploiting known symmetry has been shown to help achieve better performance empirically \citep{Elise20_MDPHHomomorphism, Elise2020_MDPHomomorphismPlaning} or tighter regret bounds theoretically \citep{Tran2022_ILB}.
However, all these works requires knowledge of symmetry group, while our setting consider hidden symmetry group which may be considerably harder.
Hidden symmetry on the context or state space has been studied by few authors, with the term context-lumpable bandits \citep{Lee2023_Context_lumpable_bandit}, meaning that the set of contexts can be partitioned into classes of similar contexts.
It is important to note that the symmetry group acts differently on the context space and the action space.
As we shall explain in detail in Section \ref{sec: Partition and FixedSS, imposibility}, while one can achieve a reduction in terms of regret in the case of hidden symmetry acting on context spaces \citep{Lee2023_Context_lumpable_bandit}, this is not the case when the symmetry group acts on the action space.
The work closest to ours is \cite{Pesquerel2021_MABsimilarArmsConstraintPartition}, where the authors consider the setting of a $K$-armed bandit, where the set of arms can be partitioned into groups with similar mean rewards, such that each group has at least $q > 2$ arms. With the constrained partition, the instance-dependent regret bounds are shown asymptotically to be of order $O\left(\tfrac{K}{q} \log T\right)$.
Comparing to \cite{Pesquerel2021_MABsimilarArmsConstraintPartition}, we study the setting of stochastic linear bandits with similar arms, in which the (hidden) symmetry and linearity structure may intertwine, making the problem more sophisticated. We also impose different constraints on the way one partitions the set of arms, which is more natural in the setting of linear bandits with infinite arms.

\section{Problem Setting}
For any $k \in \mbb N^+$, denote $[k] = \{1,\dots,k \}$. 
For $\mcal X \subseteq \mbb R^d$, let $\Delta(\mcal X)$ denote the set of all probability measures supported on $\mcal X$.
Given a set $S\subset \mbb R^k$, for some $k>1$, denote $\Pi_S(x)$ as the Euclidean projection of $x \in \mbb R^k$ on $S$, and $\mrm{conv}(S)$ as the convex hull of $S$. 

We denote by $T$ the number of rounds, which  is assumed to be known in advance. 
Each round $t \in [T]$, the agent chooses an arm $x_t \in \mcal X \subset \mbb{R}^{d}$, and nature returns a stochastic reward
 $   y_t  = \AngleBr{x_t, \theta_\star} + \eta_t, \, $
where $\eta_t$ is an i.i.d. $\sigma$-Gaussian random variable.
Now, denote $f(x_t) = \Expectation[y_t \mid x_t]$.
A bandit strategy is a decision rule for choosing an arm $x_t$ in round $t \in [T]$, given past observations up to round $t-1$.
Formally, a bandit strategy is a mapping $\mcal A : (\mcal X \times \mbb R)^{T} \rightarrow \Delta(\mcal X)$.

Let $x_\star = \argmax_{x \in \mcal X} f(x)$, and let $\mbf R_T = \mbb E\SquareBr{\sum_{t=1}^T \AngleBr{ x_\star - x_t, \theta_\star} }$ denote the expected cumulative regret.
In this paper, we investigate the question whether one can get any reduction in term of regret, if the reward function is invariant under the action of a hidden group of transformations on the set of arms. 
We define the notion of group of symmetry as follows:
\paragraph{Group and group action.} Given $d \in \mathbb{N}^+$, let $\mcal S_d$ denote the symmetry group of $[d]$, that is, $\mcal S_d:=\{h: [d] \rightarrow [d]\mid \text{$h$ is bijective}\}$ the collection of all bijective mappings from $[d]$ to itself.
We also define the group action $\hat\phi$ of $\mcal S_d$ on the vector space $\mbb R^d$ as 
\begin{equation}
    \begin{aligned}
        \phi:\; &\mcal S_d \times \mbb R^d &&\rightarrow \mbb R^d \\
        &\RoundBr{g, (x_i)_{i\in [d]}} &&\mapsto (x_{g(i)})_{i\in [d]}
    \end{aligned}
\end{equation}
In other words, a group element $g$ acts on an arm $x \in \mbb{R}^d$ by permuting the coordinates of $x$. 
In the setting of linear bandit, the permutation group action also acts on the set of parameters via coordinate permutation.
For brevity, we simply denote $g \cdot \theta$ and $g \cdot x$ as $\phi(g, \theta)$ and $\phi(g, x)$, respectively. 
Denote by $A_g$ the permutation matrix corresponding to $g$.
We write $\mcal G \leq \mcal S_d$ to denote that $\mcal G$ is a subgroup of $\mcal S_d$. 
Given any point $\theta \in \mbb R^d$, we write $\mcal G \cdot \theta = \left\{g\cdot \theta \mid g \in \mcal G\right\}$ to denote the orbit of $\theta$ under $\mcal G$. 
It is well known that the orbit induced by the induced action of a subgroup $\mcal G \leq \mcal S_d$ corresponds to a set partition of $[d]$. We denote this partition as $\PartitionByGroup{\mcal G}$.

Let $\mcal G$ be a subgroup of $\mcal S_d$ that acts on $\mbb R^d$ via the action $\phi$. 
In a symmetric linear bandit, the expected reward is invariant under the group action of $\mathcal{G}$ on $\mathcal{X}$, that is, $ f(g \cdot x) = f(x)$.
Due to the linear structure of $f$, this is equivalent to $g \cdot \theta_\star = \theta_\star$ for all $g \in \mathcal{G}$.
We assume that, \emph{while the group action $\phi$ is known to the learner, the specific subgroup $\mcal G$ is hidden and must be learned in an online manner}.



\section{Impossibility Result of Learning with General Hidden Subgroups}
\label{sec: Partition and FixedSS, imposibility}
We now show how to frame the learning problem with hidden symmetry group as the problem of model selection. 
We further analyse the structure of the collection of models, and show that no algorithm can benefit by solely knowing that $\mcal G \leq \mcal S_d$, which implies that further assumptions are required to achieve significant improvement in term of regret.

\subsection{Fixed Point Subspace and Partition}
The analysis of learning with hidden subgroup requires a group-theoretic notion which is referred to as fixed-point subspaces \cite{Bodi2009_FixedPointSubspaceProperties}.
As we shall explain promptly, there is a tight connection between the collection of fixed-point subspaces and set partitions. 

\paragraph{Fixed-point subspaces.}
For a subset $\mcal X \subseteq \mbb R^d$, denote $\FixedSSSubGroup{\mcal G}(\mcal X) := \CurlyBr{x\in  \mcal X \mid g\cdot x = x,\; \forall g\in \mcal G}$
as the fixed-point subspace of $\mcal G$; and $\FixedSS(\mcal X) := \CurlyBr{\FixedSSSubGroup{\mcal G}(\mcal X) \mid \mcal G \leq \mcal S_d}$
as the collection of all fixed-point subspaces of all subgroups of $\mcal S_d$.
We simply write $\FixedSS = \FixedSS(\mbb R^d)$ and $\FixedSSSubGroup{\mcal G} = \FixedSSSubGroup{\mcal G}(\mbb R^d)$ for brevity.

\paragraph{Set partition.}
Given $d \in \mathbb{N}^+$, we denote $\PartitionSet_d$ as the set of all partitions of $[d]$.
Let $\PartitionSet_{d,k}$ as the set of all partitions of $[d]$ with exactly $k$ classes, and $\PartitionSet_{d,\leq k}$ be the set of all partitions of $[d]$ with at most $k$ classes.
The number of set partitions with $k$ classes $|\PartitionSet_{d,k}|$ is known as the Stirling number of the second kind, and $|\PartitionSet_d|$ is known as Bell number.

\subsection{Impossibility Result}

\paragraph{Problem with known symmetry.} Before discussing the problem of hidden symmetry, let us explain why the learner with an exact knowledge of $\mcal G$ can trivially achieve smaller regret.
The reason is that $\theta_\star \in \FixedSSSubGroup{\mcal G}$ by the assumption that $\theta_\star$ is invariant w.r.t the action of group $\mcal G$.
If $\mcal G$ is known in advance, the learner can restrict the support of $\theta_\star$ in $\mrm{Fix}_{\mcal G}$, and  immediately obtains that the regret scales with $\mrm{dim}(\FixedSSSubGroup{\mcal G})$ instead of $d$, which can be significantly smaller (e.g., if $\mcal G = \mcal S_d$, then $\mrm{dim}(\FixedSSSubGroup{\mcal G}) = 1$).

For any subgroup, there exists a fixed point subspace, and some subgroups may share the same fixed point subspace. 
Therefore, instead of constructing a collection of subgroups, one can create a smaller collection of models using the collection of fixed point subspaces.
As $\mcal G$ is hidden, one must learn $\FixedSSSubGroup{\mcal G}$ within the set of candidates $\FixedSS$, leading to the formulation of the model selection. 

\paragraph{From the setting with hidden subgroup to the setting with hidden set partition.} Now, we discuss the structure of the collection of models $\FixedSS$. First, we show the equivalent structure between the collection of fixed point subspaces and the set partitions as follows.
\medskip
\begin{restatable}{proposition}{PropBijectionFixedSSPartition}\label{prop: Bijection between Subgroup and FixedSS}
    There is a bijection $\BijectionParFixSS$ between $\PartitionSet_d$ and $\FixedSS$.
\end{restatable}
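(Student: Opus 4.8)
The plan is to construct the bijection $\BijectionParFixSS$ explicitly and then verify injectivity and surjectivity. Given a partition $P = \{B_1,\dots,B_k\} \in \PartitionSet_d$, define $\BijectionParFixSS(P)$ to be the subspace of $\mbb R^d$ consisting of all vectors that are constant on each block, i.e.
\[
\BijectionParFixSS(P) := \CurlyBr{x \in \mbb R^d \mid x_i = x_j \text{ whenever } i,j \text{ lie in the same block of } P}.
\]
First I would show this lands in $\FixedSS$: for a partition $P$, let $\mcal G_P \leq \mcal S_d$ be the subgroup of all permutations that fix each block setwise (the Young subgroup $\mcal S_{B_1}\times\cdots\times\mcal S_{B_k}$). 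A direct computation shows $\FixedSSSubGroup{\mcal G_P} = \BijectionParFixSS(P)$: a vector $x$ satisfies $g\cdot x = x$ for every $g$ permuting within blocks if and only if $x$ is constant on each block (one direction is immediate; the other follows because transpositions within a block generate $\mcal G_P$ and a transposition $(i\,j)$ fixes $x$ iff $x_i = x_j$). Hence $\BijectionParFixSS(P) \in \FixedSS$, so $\BijectionParFixSS$ is a well-defined map $\PartitionSet_d \to \FixedSS$.

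Next I would establish surjectivity. Given any $\mcal G \leq \mcal S_d$, recall from the excerpt that the orbits of the induced action partition $[d]$; call this partition $\PartitionByGroup{\mcal G}$. I claim $\FixedSSSubGroup{\mcal G} = \BijectionParFixSS(\PartitionByGroup{\mcal G})$. The inclusion $\subseteq$ holds because if $g\cdot x = x$ for all $g \in \mcal G$ and $i,j$ lie in the same orbit, then $j = g(i)$ for some $g \in \mcal G$, whence $x_i = (g\cdot x)_i = x_{g(i)} = x_j$. For $\supseteq$: a generic element $g \in \mcal G$ permutes coordinates only within orbits, so if $x$ is constant on orbits then $(g\cdot x)_i = x_{g(i)} = x_i$ for all $i$. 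Thus every element of $\FixedSS$ is $\BijectionParFixSS$ of some partition, giving surjectivity.

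For injectivity, suppose $\BijectionParFixSS(P) = \BijectionParFixSS(P')$ for partitions $P, P'$. The partition is recoverable from the subspace: $i$ and $j$ lie in the same block of $P$ if and only if every $x \in \BijectionParFixSS(P)$ has $x_i = x_j$. Indeed, if $i,j$ are in the same block this holds by definition; if they are in different blocks, the indicator vector of the block containing $i$ is in $\BijectionParFixSS(P)$ and separates them. Hence the relation "$x_i = x_j$ for all $x$ in the subspace" recovers $P$ uniquely, so $\BijectionParFixSS(P) = \BijectionParFixSS(P')$ forces $P = P'$.

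I do not expect a serious obstacle here — the argument is essentially bookkeeping about Young subgroups and orbit partitions. The one point that needs care is making sure the two natural constructions agree: the map "partition $\mapsto$ constant-on-blocks subspace" and the map "subgroup $\mapsto$ fixed-point subspace" must be shown to be mutually inverse in the appropriate sense, i.e. that $\PartitionByGroup{\mcal G_P} = P$ and that no subgroup produces a fixed-point subspace outside the image of $\BijectionParFixSS$. Both follow from the orbit computation above, so the main work is simply organizing these verifications cleanly; one should also note that the construction implicitly shows $\mcal G_P$ is the \emph{largest} subgroup with fixed-point subspace $\BijectionParFixSS(P)$, which is why restricting attention to $\FixedSS$ rather than all subgroups loses nothing.
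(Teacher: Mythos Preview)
Your proposal is correct and follows essentially the same approach as the paper: both identify the fixed-point subspace of a subgroup with the ``constant on orbit-blocks'' subspace and prove injectivity by producing a separating vector (your indicator-of-a-block argument matches the paper's contrapositive). Your write-up is somewhat more self-contained, since you prove $\FixedSSSubGroup{\mcal G} = \BijectionParFixSS(\PartitionByGroup{\mcal G})$ directly rather than invoking the cited decomposition result from \cite{Bodi2009_FixedPointSubspaceProperties}, but the underlying structure is the same.
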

As there is a bijection between $\PartitionSet_d$ and $\FixedSS$, we can count the number of subspaces of each dimension $k$ explicitly using the following. 

\medskip

\begin{proposition}[\cite{Bodi2009_FixedPointSubspaceProperties}'s Theorem 14]\label{prop: dimension of fixed-point subspaces}
    Given a subgroup $\Gamma \leq \mcal S_d$ and its fixed-point subspace $\FixedSSSubGroup{\Gamma}$, suppose that $\PartitionByGroup{ \Gamma}$ partitions $[d]$ into $k$ classes, then $\mrm{dim}(\FixedSSSubGroup{\Gamma}) = k$.
\end{proposition}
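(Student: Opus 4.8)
The plan is to prove Proposition~\ref{prop: dimension of fixed-point subspaces} by giving an explicit description of $\FixedSSSubGroup{\Gamma}$ as the set of vectors that are constant on each block of the partition $\PartitionByGroup{\Gamma}$, and then exhibiting a basis of size $k$. First I would unwind the definitions: a vector $x \in \mbb R^d$ lies in $\FixedSSSubGroup{\Gamma}$ iff $g \cdot x = x$ for all $g \in \Gamma$, which by the definition of $\phi$ means $x_{g(i)} = x_i$ for all $i \in [d]$ and all $g \in \Gamma$. I would then observe that the relation $i \sim j$ whenever $j = g(i)$ for some $g \in \Gamma$ is precisely the equivalence relation whose classes form the orbit partition $\PartitionByGroup{\Gamma}$ (reflexivity, symmetry, and transitivity follow from $\Gamma$ being a subgroup, i.e.\ containing the identity, inverses, and being closed under composition). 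Hence the fixed-point condition is equivalent to: $x_i = x_j$ whenever $i$ and $j$ lie in the same block of $\PartitionByGroup{\Gamma}$.

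Given this characterization, let $B_1, \dots, B_k$ be the blocks of $\PartitionByGroup{\Gamma}$, and for each $\ell \in [k]$ let $e^{(\ell)} \in \mbb R^d$ be the indicator vector of $B_\ell$ (ones on coordinates in $B_\ell$, zeros elsewhere). Every $x \in \FixedSSSubGroup{\Gamma}$ is, by the characterization, of the form $x = \sum_{\ell=1}^k c_\ell e^{(\ell)}$ where $c_\ell$ is the common value of $x$ on block $B_\ell$; conversely every such combination is fixed by $\Gamma$. So $\{e^{(1)}, \dots, e^{(k)}\}$ spans $\FixedSSSubGroup{\Gamma}$. These vectors are linearly independent because the blocks are disjoint and nonempty: the supports of the $e^{(\ell)}$ are pairwise disjoint, so a vanishing linear combination forces every coefficient to be zero. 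Therefore $\{e^{(1)}, \dots, e^{(k)}\}$ is a basis and $\mrm{dim}(\FixedSSSubGroup{\Gamma}) = k$.

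There is no serious obstacle here; the result is essentially a bookkeeping exercise once one identifies $\PartitionByGroup{\Gamma}$ with the orbit equivalence relation. The only point requiring a little care is confirming that the orbit partition $\PartitionByGroup{\Gamma}$ (defined earlier via orbits $\mcal G \cdot \theta$) coincides with the equivalence relation generated by $i \sim g(i)$ on the index set $[d]$ — this is the standard fact that the orbits of a group acting on a set partition that set, applied to the natural action of $\Gamma$ on $[d]$ by $i \mapsto g(i)$. I would state this identification explicitly and note that it is exactly what the notation $\PartitionByGroup{\Gamma}$ was set up to mean in the ``Group and group action'' paragraph. Since the proposition is attributed to \cite{Bodi2009_FixedPointSubspaceProperties}'s Theorem 14, I would keep the argument short and self-contained, perhaps just a few lines, rather than belaboring it.
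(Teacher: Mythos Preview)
Your argument is correct and complete: identifying $\FixedSSSubGroup{\Gamma}$ with the set of vectors constant on each orbit block and exhibiting the block-indicator vectors as a basis is the standard elementary proof, and the care you take in matching the orbit equivalence relation with $\PartitionByGroup{\Gamma}$ is appropriate.

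Note, however, that the paper does not actually supply its own proof of this proposition; it is simply cited from \cite{Bodi2009_FixedPointSubspaceProperties}'s Theorem~14. The closest the paper comes is in the appendix, where it records (again citing the same source, as Proposition~\ref{prop: construction of fixed-point subspace}) the decomposition $\FixedSSSubGroup{\mcal G}(\mbb R^d) = \bigoplus_{i=1}^k \FixedSSSubGroup{\mcal G}(V_i)$, with $V_i = \mrm{Span}(E_i)$ the coordinate subspace associated to the $i$-th orbit block. Your proof is in fact a concrete instantiation of that decomposition: each summand $\FixedSSSubGroup{\mcal G}(V_i)$ is one-dimensional, spanned precisely by your indicator vector $e^{(i)}$, so the direct sum immediately yields dimension $k$. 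In that sense your approach and the cited result coincide, but you have written out the argument explicitly whereas the paper defers entirely to the reference.
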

By Proposition \ref{prop: dimension of fixed-point subspaces}, we have that the number of subspaces of dimension $k$ in $\FixedSS$ is exactly the number of set partitions with $k$-classes. 
Suppose that the learner knows that the orbit under action of $\mcal G$ partitions the index of $\theta_\star$ into 2 equivalent classes that is, $\mrm{dim}(\FixedSSSubGroup{\mcal G}) = 2$.
The learner cannot get any reduction in terms of regret. 
\medskip

\begin{restatable}{proposition}{PropLowerBoundFullSetPartition} \label{prop: Lower Bound Full Set Partition}
    Assume that the action set is the unit cube $\mcal X = \{x \in \mbb R^d \mid \|x\|_\infty \leq 1\}$, and $f$ is invariant w.r.t. action of subgroup $\mcal G \leq \mcal S_d$, such that $\mrm{dim}(\FixedSSSubGroup{\mcal G}) = 2$. Then, the regret of any bandit algorithm is lower bounded by $\mbf R_T = \Omega(d\sqrt{T})$.
\end{restatable}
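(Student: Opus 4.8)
The plan is to reduce to a standard linear bandit lower bound by exhibiting a rich family of hard instances, all consistent with the hypothesis $\dim(\FixedSSSubGroup{\mcal G}) = 2$, that forces any algorithm to pay $\Omega(d\sqrt{T})$ regret. The key observation is that $\dim(\FixedSSSubGroup{\mcal G}) = 2$ means, via Proposition \ref{prop: dimension of fixed-point subspaces}, that $\PartitionByGroup{\mcal G}$ has exactly two classes, say $\{B_1, B_2\}$; correspondingly $\theta_\star$ is constant on $B_1$ and constant on $B_2$. Since the two-class partitions of $[d]$ are plentiful (there are $2^{d-1}-1$ of them), and since $\theta_\star$ lives in a $2$-dimensional fixed-point subspace whose orientation the learner does not know, the effective uncertainty is still essentially $d$-dimensional: knowing only that $\theta_\star$ takes (at most) two distinct values across its coordinates is almost no information at all.

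First I would set up the instance family. Fix the value on one block to be $0$ and the value on the other block to be a small constant $\epsilon$ (to be tuned). For each subset $S \subseteq [d]$ with $S \notin \{\emptyset, [d]\}$, let $\theta^{(S)}$ be the vector equal to $\epsilon$ on $S$ and $0$ off $S$; this is invariant under the group generated by all permutations within $S$ and within $S^c$, whose fixed-point subspace has dimension exactly $2$, so each $\theta^{(S)}$ is a legitimate instance. Next I would restrict attention to the corners $x \in \{-1, +1\}^d$ of the cube, which are available since $\mcal X = \{\|x\|_\infty \le 1\}$. Against instance $\theta^{(S)}$, the reward of corner $x$ is $\epsilon \sum_{i \in S} x_i$, so the optimal corner sets $x_i = +1$ for $i \in S$, and any suboptimal choice of a coordinate in $S$ costs $2\epsilon$. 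This is precisely the structure of a $d$-dimensional hypercube linear bandit: identifying $S$ requires, coordinate by coordinate, distinguishing sign, and the usual information-theoretic argument (Assouad / KL-divergence decomposition over the $d$ coordinates, as in the cube lower bound in \citep{Lattimore2020_BanditBook}) shows the per-coordinate regret is $\Omega(\sqrt{T})$ when $\epsilon \asymp \sqrt{d/T}$, giving total regret $\Omega(d\sqrt{T})$.

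The main steps in order: (i) verify that each $\theta^{(S)}$ genuinely satisfies the constraint $\dim(\FixedSSSubGroup{\mcal G}) = 2$ — this is immediate from Proposition \ref{prop: dimension of fixed-point subspaces} since the associated partition $\{S, S^c\}$ has two classes, provided $S \neq \emptyset, [d]$; (ii) reduce the regret on $\mcal X$ to the regret on the corner set $\{-1,+1\}^d$, noting that $\langle x_\star - x_t, \theta^{(S)}\rangle = \epsilon \sum_{i \in S}(1 - x_{t,i}) \ge \epsilon \sum_{i \in S}(1 - \mathrm{sign}(x_{t,i}))$, so it suffices to lower-bound the number of coordinates in $S$ on which the algorithm's sign is wrong; (iii) run the standard cube argument: pick $S$ uniformly at random, use Pinsker plus the chain rule for KL divergence over $T$ rounds with Gaussian noise to bound the probability the algorithm learns the sign of coordinate $i$, and sum over $i \in S$; (iv) optimize $\epsilon$.

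The step I expect to be the main obstacle is (iii), specifically handling the fact that the algorithm observes only the scalar reward $\epsilon\langle x_t, \mathbf{1}_S\rangle + \eta_t$ rather than per-coordinate feedback, so the coordinates are not independent channels. The clean way around this is the standard trick used for the unit-cube lower bound: compare instance $\theta^{(S)}$ to the perturbed instance $\theta^{(S \triangle \{i\})}$ that flips coordinate $i$, bound the KL divergence between the reward distributions over all $T$ rounds by $\frac{1}{2\sigma^2}\sum_t \epsilon^2 \mathbb{E}[x_{t,i}^2] \le \frac{\epsilon^2 T}{2\sigma^2}$ (using $|x_{t,i}| \le 1$), and then apply the Bretagnolle–Huber inequality coordinatewise; the total regret is then at least a constant times $\epsilon \sum_{i} (1 - \sqrt{\epsilon^2 T / \sigma^2})$-type expression, which is $\Omega(d\sqrt{T})$ for the right choice $\epsilon \asymp \sigma\sqrt{d/T}$ (assuming $T \gtrsim d$ so that $\epsilon \lesssim 1$ and the corners stay feasible). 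I would also remark that this matches the trivial $\tilde{O}(d\sqrt{T})$ upper bound from running OFUL ignoring the symmetry, so the bound is tight and the impossibility is genuine.
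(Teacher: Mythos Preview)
Your overall strategy is exactly the paper's: observe that every vector taking only two distinct coordinate values lies in some two-block fixed-point subspace, embed the standard hypercube hard family inside the constraint $\dim(\FixedSSSubGroup{\mcal G})=2$, and invoke the cube lower bound (Theorem~24.1 in \cite{Lattimore2020_BanditBook}). The paper does precisely this with $\Theta=\{-\varepsilon,\varepsilon\}^d$ and $\varepsilon=T^{-1/2}$.

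However, your specific instantiation has a fatal flaw. With values $\{0,\epsilon\}$, every $\theta^{(S)}$ in your family is coordinatewise nonnegative, so the all-ones corner $x=\mathbf 1$ is optimal for \emph{every} instance simultaneously: $\langle \mathbf 1,\theta^{(S)}\rangle=\epsilon|S|=\max_{x\in\mcal X}\langle x,\theta^{(S)}\rangle$. The trivial algorithm that always plays $\mathbf 1$ therefore has zero regret on the entire family, and no lower bound can be extracted. In your Bretagnolle--Huber step this shows up as follows: when you flip coordinate $i$ from $\epsilon$ to $0$, the regret contribution of coordinate $i$ under the flipped instance is identically zero regardless of $x_{t,i}$, so the usual ``one of the two instances must pay'' dichotomy collapses.

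The fix is exactly the paper's choice: take the two values to be $\pm\varepsilon$ rather than $\{0,\epsilon\}$. Then the optimal corner is $\mrm{sign}(\theta)$, which genuinely depends on the unknown partition, and flipping coordinate $i$ swaps the optimal sign so that at least one of the two instances incurs regret $\Omega(\varepsilon)$ on that coordinate. With that correction the standard argument runs, and the right tuning is $\varepsilon\asymp\sigma/\sqrt{T}$ (not $\sigma\sqrt{d/T}$: the per-coordinate KL is $O(\varepsilon^2 T/\sigma^2)$, which must stay bounded, and the $d$ factor comes from summing over coordinates, not from $\varepsilon$).
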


The implication of Proposition \ref{prop: Lower Bound Full Set Partition} is that even if the learner knows $\theta_\star$ lies in an extremely low-dimensional subspace within the finite pools of candidates, they \emph{still suffer a regret that scales linearly with the ambient dimension $d$}. 
This suggests that \emph{further information about the group $\mathcal{G}$} must assumed to be known in order to break this polynomial dependence on $d$ in the regret bound.

\section{The Case of Hidden Subgroups with Subexponential Size}
As indicated by Proposition \ref{prop: Lower Bound Full Set Partition}, there is no improvement in terms of regret, despite the learner having access to a collection of extremely low-dimensional fixed point subspaces. 
Therefore, we assume that the learner can access only a reasonably small subset of the collection of low-dimensional fixed point subspaces.
Let $\DimensionTrueFSS$ be the upper bound for the dimension of fixed point subspaces; that is, we know that the orbit of $\mathcal{G}$ partitions $[d]$ into at most $\DimensionTrueFSS$ classes. 
Now, let us assume that the learner knows that $\mcal G$ does not partition $[d]$ freely, but must satisfy certain constraints, that is, $\PartitionByGroup{\mcal G} \in \SubsetPartitionSet_{d,\leq \DimensionTrueFSS} \subset \PartitionSet_{d,\leq \DimensionTrueFSS}$. Here, $\SubsetPartitionSet_{d,\leq \DimensionTrueFSS}$ is a small collection of partitions with at most $\DimensionTrueFSS$ classes, which encodes the constraints on the way $\mcal G$ partitions $[d]$.
We introduce an assumption regarding the cardinality of $\SubsetPartitionSet_{d,\leq \DimensionTrueFSS}$, which is formally stated in Section \ref{sec: Regret Analysis of EMC algorithm}.
Using the Proposition \ref{prop: Bijection between Subgroup and FixedSS}, we can define the collection of fixed point subspaces
associated with the collection of partition $\SubsetPartitionSet_{d,\leq \DimensionTrueFSS}$ via the bijection $\BijectionParFixSS$ as
\[\mcal M: =\BijectionParFixSS \RoundBr{ \SubsetPartitionSet_{d,\leq \DimensionTrueFSS} } \quad \text{and} \quad  M : =|\mcal M|.\]
In addition, let us define the extension of the collection $\mcal M$ as 
$
   \olsi{ \mcal M} := \CurlyBr{\mrm{conv}\RoundBr{m \cup m'} \mid\; m,\;m' \in \mcal M},
$
where $\mrm{conv}(S)$ is the convex hull of the set $S \subset \mbb R^n$.
We have that $\olsi{ \mcal M}$ is a collection of subspaces, that is, $\mrm{conv}\RoundBr{m \cup m'}$ is indeed a subspace \citep{Blumensath2009_RIP_Concentration_UoS}.
Denote $\olsi M := |\olsi{\mcal M}|$, then
we have $\olsi M  = (M^2- M)/2$.
Moreover, if dimension of subspace in $\mcal M$ is at most $\DimensionTrueFSS$, then the dimension of subspace in $\olsi{\mcal M}$ is at most $2\DimensionTrueFSS$. 
\subsection{The \texttt{Explore-Models-then-Commit} Algorithm}
Given some $n \in [T]$, we define
\begin{equation}
     Y =  X\theta_\star +  \boldsymbol \eta,
\end{equation}
where $ Y \in \mbb R^n$, $ X = \SquareBr{x_1,\dots, x_n}^\top \in \mbb R^{n\times d}$ is the design matrix, $\theta_\star \in \mbb R^d$ is the true model; $\boldsymbol{\eta} = [\eta_1,...,\eta_n]$.
We have the information that $\theta_\star$ must be contained in some (not necessarily unique) subspace $m \in \mcal M$.
Denote by $d_m$ the dimension of $m$, we have $d_m \leq \DimensionTrueFSS$ for any $m\in \mcal M$.
Let $X_m = [\Pi_m(x_t)]_{t\in [n]}^\top,$
and $S_m$ be the column space of $X_m$, one has $\mrm{dim}(S_m) \leq d_m$.
For any $m \in \mcal M$, and given $Y$, let $\Pi_{S_m}(\cdot)$ be the projection onto $S_m$. Define 
\begin{equation}\label{eq: f_m, theta_m}
    \widehat{\boldsymbol{f}}_m := \Pi_{S_m}(Y); \quad
     \widehat\theta_m := \argmin_{\theta \in m} \Norm{Y - X\theta}^2. 
\end{equation}
Now, given $n$ data points, we can choose the model $\widehat m \in \mcal M$ that minimises the least square error
\begin{equation}\label{eq: model selction procedure}
    \widehat m \in \argmin_{m\in \mcal M} \|Y - \widehat{\boldsymbol f}_m \|_2^2.
\end{equation}
Based on the framework of model selection, we now introduce our Algorithm \ref{alg: Explore Model Selection then Commit}, \texttt{Explore-Models-then-Commit} (EMC).
Our algorithm falls into the class explore-then-commit bandit algorithms.
The exploration phase consists of $t_1$ rounds.
During this phase, one samples data independently and identically distributed (i.i.d.) from an exploratory distribution $\nu$.
After the exploration phase, one computes the solution to the model selection problem and then commits to the best arm corresponding to the chosen model.

\begin{remark}
    The key step of Algorithm \ref{alg: Explore Model Selection then Commit} that may incur significant costs is solving equation \eqref{eq: model selction procedure} (line 6). 
    Without additional information about $\mcal{M}$, one might need to enumerate all models in $\mcal{M}$ and optimize among them, which would induce a time complexity of $O(n d^{c\DimensionTrueFSS})$. 
    However, if we have more information about the partitions, e.g., if they are non-crossing or non-nesting partitions, their lattice structures can be exploited to speed up the optimization process of solving equation \eqref{eq: model selction procedure}.
    Due to space limitations, we refer readers to Appendix \ref{appendix: Efficient Greedy Algorithm} for a detailed explanation of a subroutine that leverages these lattice structures for more efficient computation.
    Additionally, Section \ref{sec: experiment} demonstrates that our Algorithm \ref{alg: Explore Model Selection then Commit}, when using the lattice search algorithm for non-crossing partitions and non-nesting partitions as a subroutine, achieves polynomial computational complexity of $O(nd^5)$ and guarantees low regret.

\end{remark}

\begin{wrapfigure}{r}{0.45\textwidth}
\vspace{-14mm}
\begin{minipage}{0.45\textwidth}
{\small
\begin{algorithm}[H]
\caption{Explore Models then Commit}
\begin{algorithmic}[1]\label{alg:estc}    
    \STATE Input: $T,\; \nu,\;t_1$
    \FOR{$t= 1, \dots, t_1$}
    \STATE Independently pull arm $x_t$ according to $\nu$ and receive a reward $y_t$.
    \ENDFOR
    \STATE $X\leftarrow [x_1,..., x_{t_1}]^\top$, $Y\leftarrow [y_t]_{t\in [t_1]}$.
    \STATE Compute $\widehat m$ as \eqref{eq: model selction procedure}.
    \STATE Compute $\widehat\theta_{t_1}$ as \eqref{eq: f_m, theta_m} corresponding to $\widehat m$.
    \FOR{$t=t_1+1$ to $T$}
	\STATE Take greedy actions:
            \[x_t = \argmin_{x\in \mcal X}\AngleBr{ \widehat{\theta}_{t_1}, x}.\]
    \ENDFOR
\end{algorithmic}
\label{alg: Explore Model Selection then Commit}
\end{algorithm}
\small}
\end{minipage}
\vspace{-6mm}
\end{wrapfigure}


\vspace{4mm}
\subsection{Regret Analysis} \label{sec: Regret Analysis of EMC algorithm}
The regret analysis of Algorithm \ref{alg: Explore Model Selection then Commit} uses
results from the Gaussian model selection literature \citep{Lucien_GaussianModelSelection,Giraud2021_HighDimStatBook} as a basis. 
As such, we first state the assumptions that are common in the Gaussian model selection literature on the collection of models $\mcal M$ and the set of arms $\mcal X$ (Section~\ref{subsubsec: assumptions}).
We then provide our main analysis in Section~\ref{subsubsec: main regret}, highlighting the key technical novelties of our approach.

\subsubsection{Assumptions}
\label{subsubsec: assumptions}
Recall that due to our lower bound in Proposition \ref{prop: Lower Bound Full Set Partition}, further assumptions are required on the collection of fixed-point subspaces to achieve a reduction in terms of regret. 
As suggested by the model selection literature \citep{Kassraie2023_FeatureSelection_LinBan_LogM_Sparsity,Moradipari2021_LinearBanditRepresentationLearning}, one can achieve regret in terms of $\log(M)$ for a collection of $M$ models. 
Adopting this idea, we make the following assumption regarding the number of potential fixed-point subspaces and the set of arms. 
\medskip

\begin{assumption}[\textbf{Sub-exponential number of partitions}]\label{assp: sub-exponential models}
    The partition corresponding to $\mcal G$ belongs to a small subclass of partitions $\SubsetPartitionSet_{d,\leq \DimensionTrueFSS }\subset \PartitionSet_{d,\leq \DimensionTrueFSS}$. 
    In particular, $\pi_\mcal G \in \SubsetPartitionSet_{d,d_\star}, \; \text{for some $d_\star\leq \DimensionTrueFSS$} $, and for each $k \in [\DimensionTrueFSS]$, there exists a constant $c>0$, such that $|\SubsetPartitionSet_{d,k}| \leq O(d^{ck})$.
\end{assumption}

\medskip

\begin{assumption}[\textbf{Bounded set of arms}]\label{assp: Cube-like bounded set of arms}
    There are positive numbers $\DiamX, \RewardMax$, such that, for all $x \in \mcal X$ and $m\in \olsi {\mcal M}$, $\Norm{\Pi_m(x)}^2 \leq \DiamX$, and $|\AngleBr{x,\theta_\star}| \leq \RewardMax$.
\end{assumption}

As a consequence of Assumption \ref{assp: sub-exponential models}, the cardinality of the collection of fixed point subspaces is not too large, particularly, $M = O(d^{c\DimensionTrueFSS})$. 
First, we note that this class includes interval partitions, a structure equivalent to sparsity as a \textit{strict} subset, as explained below.
\medskip

\begin{remark}[\textbf{Equivalence between sparsity and interval partition}]\label{example: Interval Partition}
A set partition of $[d]$ is an interval partition or partition of interval if its parts are interval.
We denote $\IntervalPartition_{d}$ as the collection of all interval partition of $d$.
$\IntervalPartition_d$ admits a Boolean lattice of order $2^{d-1}$, making it equivalent to the sparsity structure in $d-1$ dimensions. Specifically, consider the set of entries of parameters $\varphi \in \mathbb{R}^{d}$ with a linear order, that is, $\varphi_1 \geq \varphi_2 \geq \dots \geq \varphi_d$.
Then define the variable $\theta \in \mathbb{R}^{d-1}$ such that $\theta_i = (\varphi_{i} - \varphi_{i+1})$. 
Each interval partition on the entries of $\varphi$ will determine a unique sparse pattern of $\theta$.
Therefore, it is clear that the cardinality of the set of interval partition with $d_0$ classes is bounded as $|\IntervalPartition_{d,\leq d_0}| = O(d^{d_0})$.
Moreover, as a result, symmetric linear bandit is strictly harder than sparse bandit and inherits all the computational complexity challenges of sparse linear bandit \footnote{Upon careful revision, we noticed that the lower-bound argument for the symmetric linear bandit with subexponential class in the NeurIPS submission version, which was adapted from the sparse linear bandit literature, is incorrect. We conjecture that the correct lower bound for the symmetric linear bandit still remains $\Omega\left(C_{\min}(\mathcal X)^{-1/3} d_0^{2/3} T^{2/3}\right)$; however, establishing this result does not appear to follow from a straightforward adaptation of existing sparse linear bandit lower bounds.}
, including the \textit{NP-hardness} of computational complexity.
\end{remark}

\begin{wrapfigure}{t!}{0.4\textwidth}
\vspace{-6mm}  
    \centering
    \includegraphics[width=\linewidth]{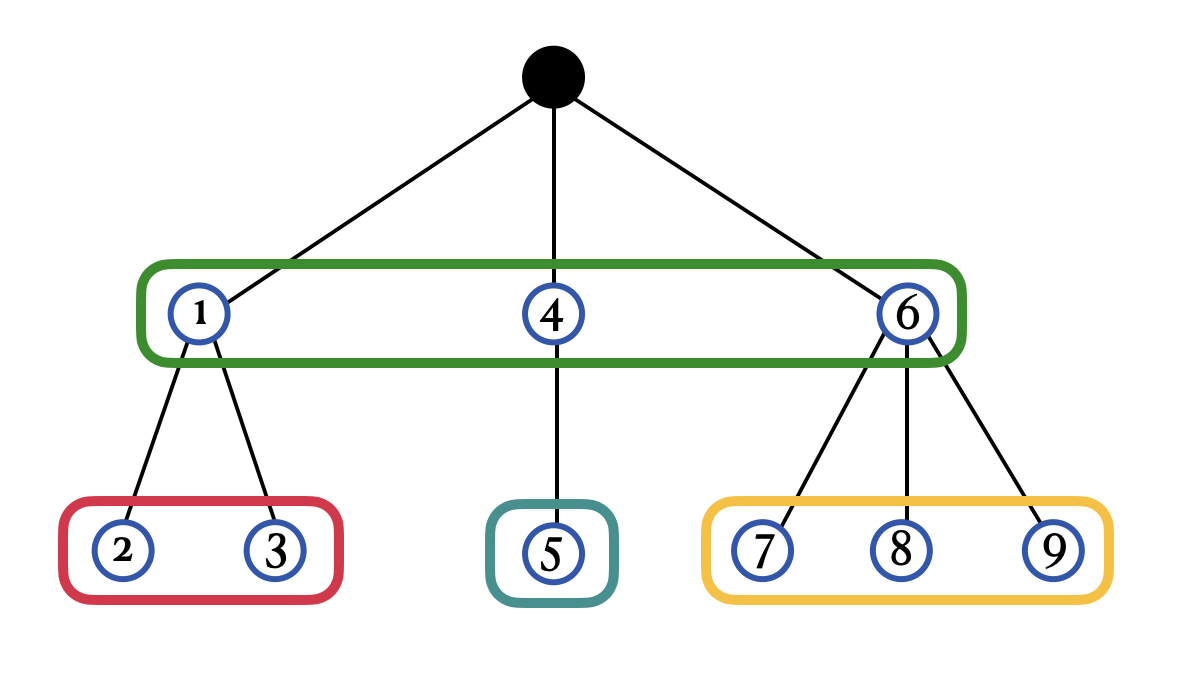}
    \caption{Partition that respects the underlying ordered tree.}
    \label{fig: partition with ordered tree}
\vspace{-8mm}
\end{wrapfigure}
Apart from sparsity, class of partitions with sub-exponential size also naturally appears when there is a hierarchical structure on the set $[d]$, and the partitioning needs to respect this hierarchical structure.
A partition that respects an ordered tree groups the children of the same node into a single equivalence class, for example, see Figure \ref{fig: partition with ordered tree}.
It is shown in \cite{Dershowitz1986_PartitionWithTree} and the cardinality of the set of partitions that respect ordered trees  is sub-exponential. 
Furthermore, as shown in \cite{Dershowitz1986_PartitionWithTree} there is a bijection between partitions that respect ordered trees and the set of non-crossing partitions.

A real-life example that meets these assumptions is  the subcontractor example in the introduction: A hierarchical structure may exist, where a hired subcontractor can further subcontract parts of the work to others.
A tree represents the hierarchical order among subcontractors, where subcontractors hired by another contractor can be grouped into one class.
Further real-life examples of non-crossing partitions and other structured partitions that satisfy sub-exponential cardinality, such as non-nesting partitions and pattern-avoidance partitions \cite{Mansour2012_SetPartitionBook}, can be found in Appendix \ref{appendix: Practical Examples of Partitions with Subexponential-Size}.



Next, similar as \cite{Hao2020_SparseLinBandit_PoorRegime}, we define the exploratory distribution as follows.
\medskip

\begin{definition}[\textbf{Exploratory distribution}]\label{def: Exploratory Distribution}
The exploratory distribution $\nu \in \Delta(\mcal X)$ is solution of the following optimisation problem 
\begin{equation}
    \nu = \argmax_{\omega \in \Delta(\mcal X)} \lambda_{\min}\RoundBr{ \mbb E_{x\sim \omega}[x x^\top]},\quad V:= \mbb E_{x\sim \nu}[x x^\top],\quad C_{\min}(\mcal X):= \lambda_{\min}(V). 
\end{equation}
\end{definition}




\subsubsection{Main Regret Upper Bound Result}
\label{subsubsec: main regret}

We now state the main result of the regret upper bound for Algorithm \ref{alg: Explore Model Selection then Commit}. 
\medskip

\begin{restatable}[\textbf{Regret upper bound}]{theorem}{ThmRegretUpperBoundEMSC} \label{Thm: Regret Upper Bound of EMSC}
    Suppose the Assumptions \ref{assp: sub-exponential models}, \ref{assp: Cube-like bounded set of arms} hold. 
    With the choice of $t_1 = \RewardMax^{-\frac{2}{3}} \sigma^{\frac{2}{3}} C_{\min}^{-\frac{1}{3}}(\mcal X) \DiamX^{\frac{1}{3}} \DimensionTrueFSS^{\frac{1}{3}}  T^{\frac{2}{3}} (\log(dT))^{\frac{1}{3}}$, then the regret of Algorithm \ref{alg: Explore Model Selection then Commit} is upper bounded as
    \begin{equation}
        \mbf R_T = O\RoundBr{\RewardMax^{\frac{1}{3}} \sigma^{\frac{2}{3}}  C_{\min}^{-\frac{1}{3}}(\mcal X)  \DiamX^{\frac{1}{3}} \DimensionTrueFSS^{\frac{1}{3}}  T^{\frac{2}{3}}  (\log(dT))^{\frac{1}{3}}}
    \end{equation}
\end{restatable}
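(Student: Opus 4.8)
The plan is to decompose the expected regret into exploration and exploitation phases and bound each separately. During the exploration phase, which lasts $t_1$ rounds, each round contributes at most $2\RewardMax$ to the regret by Assumption \ref{assp: Cube-like bounded set of arms}, so the exploration regret is $O(\RewardMax t_1)$. The bulk of the work is bounding the exploitation regret over rounds $t_1+1,\dots,T$. In these rounds we play the greedy arm with respect to $\widehat\theta_{t_1}$, so the per-round regret is $\AngleBr{x_\star - x_t,\theta_\star}$, and a standard greedy argument gives that this is at most $2\Norm{\Pi_{S}(x_\star - x_t)}\cdot\Norm{\widehat\theta_{t_1} - \theta_\star}$ on the relevant subspace, hence at most $4\sqrt{\DiamX}\,\Norm{\widehat\theta_{t_1} - \theta_\star}$ using the bounded-projection assumption. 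Thus the exploitation regret is $O\RoundBr{\sqrt{\DiamX}\,T\,\Expectation\Norm{\widehat\theta_{t_1}-\theta_\star}}$, and everything reduces to controlling the estimation error of the model-selection estimator.

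The core estimation bound is the step I expect to be the main obstacle, and it is where the Gaussian model selection machinery enters. I would first establish an oracle-type inequality for the selected predictor $\widehat{\boldsymbol f}_{\widehat m}$: because $\widehat m$ minimizes $\Norm{Y - \widehat{\boldsymbol f}_m}^2$ over $m\in\mcal M$ and $\theta_\star\in m_\star$ for some $m_\star$, a standard argument (comparing $\widehat m$ against $m_\star$, expanding squares, and using that $\widehat{\boldsymbol f}_m = \Pi_{S_m}(Y)$ is a projection) yields $\Norm{\widehat{\boldsymbol f}_{\widehat m} - X\theta_\star}^2 \le 2\AngleBr{\boldsymbol\eta,\, \widehat{\boldsymbol f}_{\widehat m} - \widehat{\boldsymbol f}_{m_\star}} + \Norm{\widehat{\boldsymbol f}_{m_\star} - X\theta_\star}^2$. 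Since $\widehat{\boldsymbol f}_{\widehat m} - \widehat{\boldsymbol f}_{m_\star}$ lives in the union of subspaces $S_m + S_{m_\star}$ of dimension at most $2\DimensionTrueFSS$, I would control the stochastic term uniformly over the $\olsi M = O(d^{2c\DimensionTrueFSS})$ subspaces in $\olsi{\mcal M}$ via a chi-squared tail bound combined with a union bound; this is precisely where the $\log(\olsi M) = O(\DimensionTrueFSS\log d)$ factor is born, and where the structure-preserving property of $\olsi{\mcal M}$ from \cite{Blumensath2009_RIP_Concentration_UoS} is used. This gives, with high probability, $\Norm{\widehat{\boldsymbol f}_{\widehat m} - X\theta_\star}^2 = O(\sigma^2 \DimensionTrueFSS \log(dT))$, after choosing the failure probability $\sim 1/T$.

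Next I would convert this prediction-error bound on the scale of the design matrix into a parameter-error bound on $\widehat\theta_{t_1}$. Here the exploratory distribution enters: by a matrix Chernoff / Tropp-style concentration argument, with the choice of $\nu$ from Definition \ref{def: Exploratory Distribution} and $n = t_1$ samples, the empirical Gram matrix restricted to any fixed low-dimensional subspace satisfies $\lambda_{\min}\RoundBr{\tfrac{1}{t_1} X_m^\top X_m} \ge \tfrac{1}{2}C_{\min}(\mcal X)$ with high probability, again uniformly over $\olsi{\mcal M}$ by a union bound (this needs $t_1 \gtrsim C_{\min}(\mcal X)^{-1}\DimensionTrueFSS\log(d)$, which the prescribed choice of $t_1$ satisfies). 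Combining this restricted eigenvalue bound with $\widehat\theta_{t_1} - \theta_\star$ belonging to a subspace of dimension $O(\DimensionTrueFSS)$ and with the prediction bound yields $\Norm{\widehat\theta_{t_1} - \theta_\star}^2 = O\RoundBr{\tfrac{\sigma^2 \DimensionTrueFSS \log(dT)}{C_{\min}(\mcal X)\, t_1}}$, so $\Expectation\Norm{\widehat\theta_{t_1}-\theta_\star} = O\RoundBr{\sigma\sqrt{\tfrac{\DimensionTrueFSS\log(dT)}{C_{\min}(\mcal X) t_1}}}$ after absorbing the low-probability failure event into a crude $O(\RewardMax T)$ bound on the total regret.

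Finally I would assemble the two pieces:
\begin{equation}
    \mbf R_T = O\RoundBr{\RewardMax t_1} + O\RoundBr{\sqrt{\DiamX}\, T \cdot \sigma\sqrt{\frac{\DimensionTrueFSS\log(dT)}{C_{\min}(\mcal X)\, t_1}}}.
\end{equation}
Optimizing over $t_1$ by balancing the two terms gives $t_1 \asymp \RewardMax^{-2/3}\sigma^{2/3}C_{\min}^{-1/3}(\mcal X)\DiamX^{1/3}\DimensionTrueFSS^{1/3}T^{2/3}(\log(dT))^{1/3}$, exactly the stated choice, and substituting back yields the claimed bound $\mbf R_T = O\RoundBr{\RewardMax^{1/3}\sigma^{2/3}C_{\min}^{-1/3}(\mcal X)\DiamX^{1/3}\DimensionTrueFSS^{1/3}T^{2/3}(\log(dT))^{1/3}}$. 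The delicate points to get right are the uniformity of both the chi-squared concentration and the restricted-eigenvalue concentration over $\olsi{\mcal M}$ rather than $\mcal M$ (needed because the difference of two projections lives in a sum of two subspaces), and carefully handling the expectation by splitting on the high-probability event; the rest is bookkeeping.
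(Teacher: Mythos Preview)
Your proposal is correct and follows essentially the same route as the paper: decompose regret into exploration plus exploitation, bound the exploitation term by $O(\sqrt{\DiamX}\,T\,\Norm{\widehat\theta_{t_1}-\theta_\star})$ via the greedy argument and the projection bound from Assumption~\ref{assp: Cube-like bounded set of arms}, then invoke (i) a Gaussian model-selection oracle inequality to get $\Norm{X(\widehat\theta_{t_1}-\theta_\star)}^2 = O(\sigma^2\DimensionTrueFSS\log(d/\delta))$ uniformly over $\mcal M$, and (ii) a RIP-type concentration over $\olsi{\mcal M}$ to transfer this to $\Norm{\widehat\theta_{t_1}-\theta_\star}$, before balancing $t_1$. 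The only small slip is the sample-size requirement for the restricted-eigenvalue step: the paper's RIP argument (Proposition~\ref{prop: Concentration of RIP condition with arbitrary distribution}) needs $t_1 \gtrsim \DiamX^2 C_{\min}^{-2}(\mcal X)\,\DimensionTrueFSS\log(d/\delta)$ rather than $C_{\min}^{-1}(\mcal X)\DimensionTrueFSS\log(d)$, but this is absorbed by the prescribed $t_1$ and does not affect the final bound.
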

We note that when $K_x = O(\DimensionTrueFSS)$, as in the case of sparsity, our upper bound is $\Tilde O\RoundBr{C_{\min}^{-\frac{1}{3}}(\mcal X) \DimensionTrueFSS^{\frac{2}{3}}  T^{\frac{2}{3}}}$.

The main idea is to bound the risk error after exploration rounds, as stated in the following lemma which implies the regret bound after standard manipulations. 
\medskip

\begin{restatable}{lemma}{LemExplorationError}\label{Lem: Exploration Error}
    Suppose the Assumptions \ref{assp: sub-exponential models}, \ref{assp: Cube-like bounded set of arms} hold. 
    For $t_1 = \Omega(\DiamX^2 \DimensionTrueFSS C_{\min}^{-2}(\mcal X)\log(d/\delta))$, with probability at least $1-\delta$, one has the estimate
\begin{equation}
    \Norm{\theta_\star - \widehat \theta_{t_1}} = O\RoundBr{\sqrt{\frac{\sigma^2\DimensionTrueFSS\log(d/\delta)}{C_{\min}(\mcal X)t_1}}}.
\end{equation}    

\end{restatable}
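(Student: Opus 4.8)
The plan is to decompose the estimation error into a "model selection is correct" event plus an "oracle least-squares on the true subspace" error, and control each piece. First I would condition on the high-probability event $\mathcal{E}$ that the empirical Gram matrix built from the $t_1$ i.i.d.\ exploratory samples is well-conditioned: by a matrix concentration inequality (matrix Bernstein / Tropp), if $t_1 = \Omega(\DiamX^2 \DimensionTrueFSS C_{\min}^{-2}(\mcal X)\log(d/\delta))$ then $\lambda_{\min}\big(\tfrac1{t_1} X^\top X\big) \ge C_{\min}(\mcal X)/2$ with probability $1-\delta/2$; more precisely we need the analogous bound restricted to each subspace $S_m$ and $S_{\bar m} \in \olsi{\mcal M}$, which is exactly where Assumption~\ref{assp: sub-exponential models} enters — a union bound over the $\olsi M = O(d^{2c\DimensionTrueFSS})$ subspaces of dimension at most $2\DimensionTrueFSS$ costs only a $\log(d)$ factor inside the concentration, and this is the compressed-sensing/RIP-on-union-of-subspaces step alluded to in the text via \citep{Blumensath2009_RIP_Concentration_UoS}.

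Next I would analyze the model-selection procedure~\eqref{eq: model selction procedure}. The key identity is that $\|Y - \widehat{\boldsymbol f}_m\|^2 = \|Y - \Pi_{S_m} Y\|^2$, and since $\theta_\star \in m_\star$ for some true model $m_\star \in \mcal M$ we have $X\theta_\star \in S_{m_\star}$. Writing $Y = X\theta_\star + \boldsymbol\eta$ and using the Pythagorean decomposition, $\|Y - \Pi_{S_m}Y\|^2 - \|Y - \Pi_{S_{m_\star}}Y\|^2$ reduces to a difference of quadratic forms in the Gaussian noise $\boldsymbol\eta$ projected onto the relevant subspaces, up to the bias term $\|( I - \Pi_{S_m}) X\theta_\star\|^2 \ge 0$. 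Standard Gaussian-model-selection arguments (Laurent–Massart $\chi^2$ tail bounds on $\|\Pi_{S_m}\boldsymbol\eta\|^2$, uniformly over the $M = O(d^{c\DimensionTrueFSS})$ candidate subspaces, each of dimension $\le \DimensionTrueFSS$) then give that on a further event of probability $1-\delta/2$, the selected $\widehat m$ satisfies
\begin{equation}
\|\Pi_{S_{\widehat m}} X\theta_\star - X\theta_\star\|^2 \le \|(I - \Pi_{S_{m_\star}})X\theta_\star\|^2 + O\big(\sigma^2 \DimensionTrueFSS \log(d/\delta)\big) = O\big(\sigma^2 \DimensionTrueFSS \log(d/\delta)\big),
\end{equation}
the last equality because the bias vanishes ($X\theta_\star \in S_{m_\star}$). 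Crucially the $\log(d/\delta)$ arises from the penalty/union bound absorbing $\log M = O(\DimensionTrueFSS \log d)$, and the subspaces $\mrm{conv}(\widehat m \cup m_\star) \in \olsi{\mcal M}$ are what let us compare $\widehat m$ against $m_\star$ simultaneously.

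Finally I would convert the prediction-error bound on $X\widehat\theta_{t_1}$ into a parameter-error bound on $\widehat\theta_{t_1}$. Since both $\theta_\star$ and $\widehat\theta_{t_1}$ lie in $\mrm{conv}(\widehat m \cup m_\star) \in \olsi{\mcal M}$, their difference lies in a fixed subspace of dimension $\le 2\DimensionTrueFSS$ on which, by the conditioning event $\mathcal{E}$, $\tfrac1{t_1}\|X(\theta_\star - \widehat\theta_{t_1})\|^2 \ge \tfrac{C_{\min}(\mcal X)}{2}\|\theta_\star - \widehat\theta_{t_1}\|^2$; combining with $\|X(\theta_\star - \widehat\theta_{t_1})\|^2 = O(\sigma^2 \DimensionTrueFSS\log(d/\delta))$ yields $\|\theta_\star - \widehat\theta_{t_1}\| = O\big(\sqrt{\sigma^2 \DimensionTrueFSS \log(d/\delta) / (C_{\min}(\mcal X) t_1)}\big)$, as claimed, after a final union bound over the two failure events. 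The main obstacle I anticipate is getting the restricted-eigenvalue / well-conditioning statement to hold \emph{uniformly over the union of subspaces} $\olsi{\mcal M}$ with only a $\log(d)$ (not $\mathrm{poly}(d)$) price — this is the heart of why Assumption~\ref{assp: sub-exponential models}'s cardinality bound $|\SubsetPartitionSet_{d,k}| \le O(d^{ck})$ is exactly the right hypothesis, and it requires carefully combining the sub-Gaussian covariance concentration with a covering/union-bound argument in the style of \citep{Blumensath2009_RIP_Concentration_UoS} rather than a naive per-coordinate bound.
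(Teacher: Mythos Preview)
Your proposal is correct and follows essentially the same two-step architecture as the paper: bound the prediction error $\|X(\theta_\star-\widehat\theta_{t_1})\|^2$ via a Gaussian model-selection argument (the paper's Proposition~\ref{prop: PropModelSelectionErrorHighProb}), then transfer to the risk $\|\theta_\star-\widehat\theta_{t_1}\|$ via a restricted-isometry bound uniform over $\olsi{\mcal M}$ (the paper's Proposition~\ref{prop: Concentration of RIP condition with arbitrary distribution}), using that $\theta_\star-\widehat\theta_{t_1}\in\mrm{conv}(\widehat m\cup m_\star)\in\olsi{\mcal M}$. One small slip: your displayed oracle inequality bounds $\|(I-\Pi_{S_{\widehat m}})X\theta_\star\|^2$, but what you actually need (and later use) is a bound on $\|X\theta_\star-X\widehat\theta_{t_1}\|^2=\|X\theta_\star-\Pi_{S_{\widehat m}}Y\|^2$, which also carries the noise term $\Pi_{S_{\widehat m}}\boldsymbol\eta$; the paper handles this directly in Proposition~\ref{prop: PropModelSelectionErrorHighProb}, and the same $\chi^2$ tail bounds you invoke absorb it without changing the order.
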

\medskip

\begin{remark}[\textbf{Non-triviality of Lemma~\ref{Lem: Exploration Error}}]
At the first glance, it seems that we can cast the problem of learning with a collection of $M$ subspaces into a model selection problem in linear bandit with $M$ features. This leads to a question: \emph{Can we apply the model selection framework based on model aggregation in \cite{Moradipari2021_LinearBanditRepresentationLearning, Kassraie2023_FeatureSelection_LinBan_LogM_Sparsity} to our case?}

First, let us explain how to cast our problem into a model selection problem in linear bandit. 
For each subspace $m$, let $\Phi_m: \mbb R^d \rightarrow \mbb R^{\DimensionTrueFSS}$ be the feature map that computes the image of the projection $\Pi_m$ with respect to the orthogonal basis of subspace $m$. Thus, we then have a collection of $M$ features $\{\Phi_m\}_{m \in \mathcal{M}}$. 
Consider the algorithm introduced in \cite{Kassraie2023_FeatureSelection_LinBan_LogM_Sparsity}, which concatenates the feature maps into $\boldsymbol{\Phi}(x) = [\Phi_1(x), \ldots, \Phi_M(x)] \in \mathbb{R}^{M d_0}$, and the regret bound depends on $\|\boldsymbol{\Phi}(x)\|_2$.

However, in our case where $\|\Phi_m(x)\|_2 < 1$, we can only bound $\|\boldsymbol{\Phi}(x)\|_2 \leq \sqrt{M}$, which leads to a $\sqrt{M}$ dependence on regret, if we use their algorithm.
Regarding \cite{Moradipari2021_LinearBanditRepresentationLearning}, their algorithm aggregates the predictions among models for each arm, and based on that, they compute the distribution for choosing each arm. 
This leads to the regret scaling with the number of arms $K$, which is not feasible in our case when $K = \infty$.

We note that the similarity with the model selection technique in \cite{Kassraie2023_FeatureSelection_LinBan_LogM_Sparsity,Moradipari2021_LinearBanditRepresentationLearning} is that they use model aggregation among $\mathcal{M}$ to bound the prediction error $\sum_{t=1}^T \left<x_t,\widehat{\theta} - \theta_\star\right>^2$, but this does not necessarily guarantee the risk error $\|\widehat{\theta} - \theta_\star\|_2$.
The reason is that, although model aggregation can guarantee a small prediction error, it imposes no restriction on the estimator $\widehat{\theta}$, which limits its ability to leverage the further benign property of designed matrix $X$.
Instead of model aggregation, our algorithm explicitly picks the best model from the pool $\mathcal{M}$, ensuring the following two properties: (1) The prediction error is small, similar to model aggregation; and (2) We can guarantee that $\widehat{\theta}$ lies in one of the subspaces of $\mathcal{M}$.
The second property gives us control over $(\widehat{\theta} - \theta_\star)$ by ensuring it lies in at most $M^2$ subspaces. 
Then, exploiting the restricted isometry property (see Definition~\ref{def:RIP}) of designed matrix $X$, we can guarantee that with $O(\log(M))$ exploratory samples, we can bound the risk error $\|\widehat{\theta} - \theta_\star\|_2$. This is crucial for eliminating polynomial dependence on $M$ and the number of arms $K$.

\end{remark}

\paragraph{Proof sketch of Lemma \ref{Lem: Exploration Error}.} We provide a proof sketch here and defer their full proof to Appendix \ref{appendix: Regret Reduction with subexponential-size collection}. Our proof borrows techniques from Gaussian model selection \citep{Giraud2021_HighDimStatBook} and the compressed sensing literature \citep{Blumensath2009_RIP_Concentration_UoS}. 
There are two steps to bound the risk error as in Lemma \ref{Lem: Exploration Error}:
\paragraph{Step 1 - Bounding the prediction error.} We can bound the prediction error $\Norm{X\theta_\star-X\widehat{\theta}_{t_1}}$ using the Gaussian model selection technique \citep{Giraud2021_HighDimStatBook} as follows. 
\medskip

\begin{restatable}{proposition}{PropModelSelectionErrorHighProb}\label{prop: PropModelSelectionErrorHighProb}
Let $\boldsymbol{f}_\star = X\theta_\star$. For the choice of $\widehat{\boldsymbol f}_{\widehat m}$ as in Eqn.~\eqref{eq: f_m, theta_m} \& Eqn.~\eqref{eq: model selction procedure}, with probability at least $1-\delta$, there exists a constant $C >1$ such that
    \begin{equation}
        \Norm{\widehat{\boldsymbol f}_{\widehat m} - \boldsymbol{f}_\star}^2 
         \leq  C\sigma^2\log\RoundBr{M\delta^{-1}}.
    \end{equation}
\end{restatable}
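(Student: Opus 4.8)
The plan is to follow the standard Gaussian model selection argument (see \cite{Giraud2021_HighDimStatBook}, Chapter~2) adapted to our finite collection of linear subspaces $\{S_m\}_{m\in\mcal M}$. Recall $\widehat m$ minimizes $\|Y-\widehat{\boldsymbol f}_m\|^2$ over $m\in\mcal M$ where $\widehat{\boldsymbol f}_m=\Pi_{S_m}(Y)$, and write $\boldsymbol f_\star=X\theta_\star$. The first step is to fix an arbitrary $m_0\in\mcal M$ containing $\theta_\star$ (which exists by Assumption~\ref{assp: sub-exponential models}), so that $\boldsymbol f_\star\in S_{m_0}$ and hence $\widehat{\boldsymbol f}_{m_0}-\boldsymbol f_\star=\Pi_{S_{m_0}}\boldsymbol\eta$. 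By optimality of $\widehat m$ we have $\|Y-\widehat{\boldsymbol f}_{\widehat m}\|^2\le\|Y-\widehat{\boldsymbol f}_{m_0}\|^2$. Expanding $Y=\boldsymbol f_\star+\boldsymbol\eta$ in both sides and cancelling $\|\boldsymbol\eta\|^2$ leads, after using that $\widehat{\boldsymbol f}_{\widehat m}=\Pi_{S_{\widehat m}}Y$ is an orthogonal projection of $Y$, to an inequality of the shape
\begin{equation}
\Norm{\widehat{\boldsymbol f}_{\widehat m}-\boldsymbol f_\star}^2 \le 2\AngleBr{\boldsymbol\eta,\widehat{\boldsymbol f}_{\widehat m}-\boldsymbol f_\star} - 2\AngleBr{\boldsymbol\eta,\boldsymbol f_\star-\widehat{\boldsymbol f}_{m_0}} + \text{(lower-order terms)}.
\end{equation}

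The second step is to control the stochastic terms uniformly over the collection. Each vector $\widehat{\boldsymbol f}_m-\boldsymbol f_\star$ lies in the subspace $\mrm{span}(S_m\cup S_{m_0})$, which has dimension at most $2\DimensionTrueFSS$ (this is exactly why we introduced $\olsi{\mcal M}$). Hence $\AngleBr{\boldsymbol\eta,\widehat{\boldsymbol f}_m-\boldsymbol f_\star}\le\Norm{\Pi_{U_m}\boldsymbol\eta}\cdot\Norm{\widehat{\boldsymbol f}_m-\boldsymbol f_\star}$ where $U_m=\mrm{span}(S_m\cup S_{m_0})$, and $\Norm{\Pi_{U_m}\boldsymbol\eta}^2$ is $\sigma^2$ times a $\chi^2$ variable with at most $2\DimensionTrueFSS$ degrees of freedom. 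Applying the standard $\chi^2$ tail bound ($\Pr[\chi^2_k\ge k+2\sqrt{kx}+2x]\le e^{-x}$) together with a union bound over the at most $M$ choices of $m$ (equivalently over $\olsi{\mcal M}$, whose cardinality is $O(M^2)$ — either suffices, giving $\log M$ up to constants), we get that with probability at least $1-\delta$, simultaneously for all $m$,
\begin{equation}
\Norm{\Pi_{U_m}\boldsymbol\eta}^2 \le C'\sigma^2\bigl(\DimensionTrueFSS + \log(M\delta^{-1})\bigr) \le C''\sigma^2\log(M\delta^{-1}),
\end{equation}
where the last step absorbs $\DimensionTrueFSS$ since $\DimensionTrueFSS \le \log M$ under Assumption~\ref{assp: sub-exponential models} (as $M=O(d^{c\DimensionTrueFSS})$, so $\DimensionTrueFSS = O(\log M/\log d)$). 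The term $\AngleBr{\boldsymbol\eta,\boldsymbol f_\star-\widehat{\boldsymbol f}_{m_0}}=-\Norm{\Pi_{S_{m_0}}\boldsymbol\eta}^2$ is handled the same way.

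The third step is the algebra: plug these bounds into the basic inequality. Writing $a=\Norm{\widehat{\boldsymbol f}_{\widehat m}-\boldsymbol f_\star}$ and $b^2=C''\sigma^2\log(M\delta^{-1})$, the inequality reads roughly $a^2\le 2ab + 2b^2$, which is a quadratic in $a$ and forces $a\le (1+\sqrt3)b$, i.e. $a^2\le C\sigma^2\log(M\delta^{-1})$ for a suitable absolute constant $C>1$. This yields the claimed bound.

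I expect the main obstacle to be purely bookkeeping rather than conceptual: getting the basic inequality into exactly the right form requires care with the cross terms (one must use that $\widehat{\boldsymbol f}_{\widehat m}$ is the \emph{projection} of $Y$, not an arbitrary element of $S_{\widehat m}$, to turn $\AngleBr{\boldsymbol\eta, \widehat{\boldsymbol f}_{\widehat m}-\boldsymbol f_\star}$ into something controllable), and one must be careful that the union bound is over a data-independent collection of subspaces (the pair $(S_m, S_{m_0})$ ranges over a fixed set of size $\le M$ once $m_0$ is fixed, so this is fine). A secondary subtlety is justifying the absorption $\DimensionTrueFSS\lesssim\log M$; if one does not want to invoke Assumption~\ref{assp: sub-exponential models} here, one can simply keep the bound as $C\sigma^2(\DimensionTrueFSS+\log(M\delta^{-1}))$ and note it is dominated by $C\sigma^2\log(M\delta^{-1})$ whenever $M\ge e^{\DimensionTrueFSS}$, which holds in all regimes of interest. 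Everything else is the textbook Gaussian model selection computation.
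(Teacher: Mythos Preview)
Your proposal is correct and follows essentially the same Gaussian model selection argument as the paper: start from the basic inequality $\|Y-\widehat{\boldsymbol f}_{\widehat m}\|^2\le\|Y-\widehat{\boldsymbol f}_{m_\star}\|^2$, expand, and control the cross term $\langle\boldsymbol\eta,\widehat{\boldsymbol f}_{\widehat m}-\boldsymbol f_\star\rangle$ via a $\chi^2$ tail bound plus union bound over $\mcal M$. Two minor stylistic differences: the paper uses the penalization trick (adding and subtracting $K\sigma^2 d_{\widehat m}$, then bounding $U_m-\tfrac{2K}{K+1}d_m$ uniformly) rather than your direct Cauchy--Schwarz followed by solving a quadratic in $a$; and the paper takes the auxiliary subspace to be $S_m+\langle\boldsymbol f_\star\rangle$ (dimension $\le d_m+1$) rather than your $S_m+S_{m_0}$ (dimension $\le 2\DimensionTrueFSS$). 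Both choices work here since all $d_m\le\DimensionTrueFSS$ and the $\DimensionTrueFSS$ term is absorbed into $\log M$ either way; your route is arguably cleaner for this uniform-dimension setting, while the paper's penalized form is the one that generalizes to collections with varying $d_m$.
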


\paragraph{Step 2 - Bounding the risk error from prediction error.}
To bound the risk error from the prediction error, we invoke the restricted isometry property on the union of subspaces of a sub-Gaussian random matrix as in \cite{Blumensath2009_RIP_Concentration_UoS}. 
Note that $\widehat\theta$ and $\theta_\star$ can belong to two different subspaces of $\mcal M$, and $\widehat\theta-\theta_\star$ may not lie in any subspace of $\mcal M$, but in $\olsi{\mcal M}$. 
An important property of the design matrix $X$, which allows one to recover $\theta_\star$ with the knowledge that $\theta_\star$ is in a subspace $m\in \mcal M$, can be captured by the following notion of restricted isometry property (RIP): 
\medskip

\begin{definition}[\textbf{Restricted isometry property}]
\label{def:RIP}
    For any matrix $X$, any collection of subspaces $\olsi{\mcal M}$ and any $\theta \in m \in \olsi{\mcal M}$, we define $\olsi{\mcal M}$-restricted isometry constant $\delta_{\olsi{\mcal M}}(X)$ to be the smallest quantity such that
    \begin{equation}
        (1-\delta_{\olsi{\mcal M}}(X)){\Norm{\theta}^2} \leq \Norm{X\theta}^2 \leq (1+\delta_{\olsi{\mcal M}}(X)) {\Norm{\theta}^2}.
    \end{equation}
\end{definition}

We have the minimum number of samples required so that a random matrix $X$ satisfies RIP for a given constant with high probability as Proposition \ref{prop: Concentration of RIP condition with arbitrary distribution} below. Then, Lemma \ref{Lem: Exploration Error} is followed by combining Proposition \ref{prop: Concentration of RIP condition with arbitrary distribution} and Proposition \ref{prop: PropModelSelectionErrorHighProb}.


\medskip

\begin{restatable}{proposition}{PropConcentrationRIP}\label{prop: Concentration of RIP condition with arbitrary distribution}
    Let $X = [x_t]_{t\in [n]}$, where $x_t$'s are is i.i.d. drawn from $\nu$, and 
    let 
    $
        n =\Omega\RoundBr{ C_{\min}^{-2}(\mcal X)\DiamX^2 \RoundBr{\log(2\olsi{M} \DimensionTrueFSS \delta^{-1})}}.
    $
    Then, with probability at least $1-\delta$, and for any $\theta_1, \theta_2$ in subspaces of $\mcal M$, one has that
    \begin{equation}
          \Norm{\theta_1 - \theta_2}^2 \leq 2C_{\min}^{-1}(\mcal X) n^{-1} \Norm{X(\theta_1 -\theta_2) }^2.
    \end{equation}
\end{restatable}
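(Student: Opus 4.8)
The plan is to reduce the claimed two-point inequality to a restricted isometry property (RIP) statement for the whitened design matrix, and then invoke a concentration bound for sub-Gaussian random matrices over a union of low-dimensional subspaces, in the spirit of \cite{Blumensath2009_RIP_Concentration_UoS}. First I would observe that for any $\theta_1,\theta_2$ lying in subspaces of $\mcal M$, the difference $\theta_1-\theta_2$ lies in $\mrm{conv}(m\cup m')$ for some $m,m'\in\mcal M$, i.e.\ in a subspace belonging to $\olsi{\mcal M}$; there are at most $\olsi M$ such subspaces, each of dimension at most $2\DimensionTrueFSS$. Hence it suffices to show that, with probability at least $1-\delta$, every $\theta\in m\in\olsi{\mcal M}$ satisfies $\Norm{\theta}^2 \le 2C_{\min}^{-1}(\mcal X) n^{-1}\Norm{X\theta}^2$, which is exactly the lower half of an RIP-type bound with constant $\delta_{\olsi{\mcal M}} = 1/2$ after rescaling.

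The key step is the whitening. Writing $V = \mbb E_{x\sim\nu}[xx^\top]$, set $z_t = V^{-1/2}x_t$, so that $\mbb E[z_tz_t^\top] = I$ and each $z_t$ is sub-Gaussian with a parameter controlled by $\DiamX$ and $C_{\min}(\mcal X) = \lambda_{\min}(V)$: indeed, for any unit $u$, $\AngleBr{u,z_t}^2 = \AngleBr{V^{-1/2}u, x_t}^2 \le \Norm{V^{-1/2}u}^2\,\DiamX \le \DiamX/C_{\min}(\mcal X)$ on the relevant subspaces, giving the boundedness/sub-Gaussianity needed to apply the concentration result. Then for a \emph{fixed} subspace $m\in\olsi{\mcal M}$ of dimension $\le 2\DimensionTrueFSS$, a standard Bernstein/matrix-Chernoff argument on the $d_m\times d_m$ matrix $\frac1n\sum_t (\Pi_m z_t)(\Pi_m z_t)^\top$, combined with an $\varepsilon$-net over the unit sphere of $m$ (covering number $\le (3/\varepsilon)^{2\DimensionTrueFSS}$), shows that $\tfrac1n\Norm{Z\theta}^2 \in [\tfrac12,\tfrac32]\Norm{\theta}^2$ uniformly over $\theta\in m$ once $n = \Omega(C_{\min}^{-2}(\mcal X)\DiamX^2(\DimensionTrueFSS + \log(\olsi M/\delta)))$; absorbing the $\DimensionTrueFSS$ into the $\log$ factor and folding in the $\DimensionTrueFSS$ appearing in the stated sample complexity $\log(2\olsi M\DimensionTrueFSS\delta^{-1})$ accounts for the net. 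A union bound over the $\olsi M$ subspaces in $\olsi{\mcal M}$ then yields the uniform statement. Finally, unwhitening via $\Norm{X\theta}^2 = \Norm{Z V^{1/2}\theta}^2 \ge \tfrac{n}{2}\Norm{V^{1/2}\theta}^2 \ge \tfrac{n}{2}C_{\min}(\mcal X)\Norm{\theta}^2$ rearranges to $\Norm{\theta}^2 \le 2C_{\min}^{-1}(\mcal X)n^{-1}\Norm{X\theta}^2$, and applying this to $\theta = \theta_1-\theta_2$ gives the proposition.

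The main obstacle I anticipate is handling the sub-Gaussian (rather than Gaussian or bounded-isotropic) nature of the whitened vectors $z_t$ cleanly while keeping the dependence on $C_{\min}(\mcal X)$ and $\DiamX$ sharp: one must be careful that the tail parameter entering the exponential concentration is $\DiamX/C_{\min}(\mcal X)$ and not something worse, since this is what produces the $C_{\min}^{-2}(\mcal X)\DiamX^2$ factor in $n$, and the variance term in the Bernstein bound must be tracked separately from the almost-sure bound. A secondary technical point is making the $\varepsilon$-net argument on a union of subspaces rigorous — this is exactly the content of the RIP-over-union-of-subspaces results in \cite{Blumensath2009_RIP_Concentration_UoS}, so I would cite their concentration lemma directly and only verify that our $z_t$ satisfy its hypotheses with the stated constants, rather than reproving it from scratch.
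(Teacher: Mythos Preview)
Your proposal is correct and follows essentially the same route as the paper: whiten by $V^{-1/2}$, bound $\AngleBr{V^{-1/2}x,v}^2 \le \DiamX/C_{\min}(\mcal X)$ for directions $v$ in $V^{1/2}(\olsi m)$, apply pointwise concentration plus the $\varepsilon$-net RIP lemma (the paper cites Lemma~5.1 of \cite{Baraniuk2008_RandomMatrixConcentration} rather than \cite{Blumensath2009_RIP_Concentration_UoS}, but it is the same mechanism), union-bound over the $\olsi M$ subspaces, fix the RIP constant at $1/2$, and unwhiten via $\Norm{V^{1/2}\theta}^2 \ge C_{\min}(\mcal X)\Norm{\theta}^2$. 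The only cosmetic difference is that the paper uses Hoeffding for bounded random variables (since the boundedness $\le \DiamX/C_{\min}(\mcal X)$ is already in hand) rather than a Bernstein/matrix-Chernoff bound, and your remark about ``absorbing the $\DimensionTrueFSS$ into the $\log$ factor'' is exactly how the paper reconciles the additive $\DimensionTrueFSS$ from the net with the stated $\log(2\olsi M\DimensionTrueFSS\delta^{-1})$ complexity, relying implicitly on $\log(\olsi M) = O(\DimensionTrueFSS\log d)$ from Assumption~\ref{assp: sub-exponential models}.
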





\section{Improved Regret Upper Bound with Well-Separated Partitions}

\label{subsec: well-separated}
We now show that by adding more structure (i.e., well-separatedness) to the setting, we can further improve the regret upper bound to $O(\sqrt{T})$.
In particular, we will introduce the notion of well-separatedness in this section, and show that this notion can lead to improved (i.e., $O(\sqrt{T})$) regret bounds.

\begin{wrapfigure}{t!}{0.45\textwidth}
\vspace{-8mm}  
\begin{minipage}{0.45\textwidth}
{\small
\begin{algorithm}[H]
\caption{Exploring Model then Commit with well-separated partition}
\begin{algorithmic}[1]\label{alg:estc true model} 
    \STATE Input: $T,\; \nu, \; t_2$
    \FOR{$t= 1, \cdots, t_2$}
    \STATE Independently pull arm $x_t$ according to $\nu$ and receive a reward $y_t$.
    \ENDFOR
    \STATE $X\leftarrow [x_1,..., x_{t_1}]^\top$, $Y\leftarrow [y_t]_{t\in [t_1]}$.
    \STATE Compute $\widehat m$ as \eqref{eq: model selction procedure}.
    \FOR{$t=t_2+1$ to $T$}
	\STATE Playing \text{OFUL} algorithm \cite{Abbasi2011_OFUL} on $\widehat m$.
    \ENDFOR
\end{algorithmic}
\end{algorithm}
}
\end{minipage}
\vspace{-4mm}
\end{wrapfigure}

For each partition $p \in \PartitionSet_d$, there is a unique equivalence relation on $[d]$ corresponding to $p$. 
Denote by $\overset{p}{\sim}$ the equivalence relation corresponding to $p$. 
Next, we define well-separatedness. 
\medskip

\begin{assumption}[\textbf{Well-separated partitioning}] \label{assp: well distinguised partition}
Given the true subgroup $\mcal G$, and the corresponding partition $\PartitionByGroup{\mcal G}$. For all 
$(i,j)$ such that $ i\overset{\PartitionByGroup{\mcal G}}\nsim j $,  it holds that $|\theta_{\star,i} -\theta_{\star,j}| \geq \varepsilon_0$, for some $\varepsilon_0 > 0$.
\end{assumption}

The implication of Assumption \ref{assp: well distinguised partition} is that the projection of $\theta_\star$ to any subspace $m \in \mathcal{M}$ not containing $\theta_\star$ will cause some bias in the estimation error. 
In particular, one can show that for any $m \in \mcal M$ such that $\theta_\star \notin m$, it holds that
\begin{equation}
    \Norm{\theta_\star - \Pi_m(\theta_\star)}^2 \geq \varepsilon_0^2/2.
\end{equation}
We now show that under the Assumption \ref{assp: well distinguised partition}, after the exploring phase, the algorithm returns a true fixed-point subspace $\widehat m \ni \theta_\star$ with high probability. 
\medskip

\begin{restatable}{theorem}{ThmRegretUpperBoundWellDistinguishedPartition} \label{Thm: Regret Upper Bound of LTMC}
    Suppose the Assumptions \ref{assp: sub-exponential models}, \ref{assp: Cube-like bounded set of arms}, \ref{assp: well distinguised partition} hold.
    Let $t_2 = \Omega\RoundBr{\frac{\sigma^2 \DiamX^2 \DimensionTrueFSS\log(dT)}{C_{\min}^2(\mcal X)\varepsilon_0^2}}$. Then, Algorithm \ref{alg:estc true model} returns $\widehat m \ni \theta_\star$ with probability at least $1-1/T$, and its regret is upper bounded as 
        \begin{equation}
        \mbf R_T = O\RoundBr{\frac{\RewardMax\sigma^2 \DiamX^2 \DimensionTrueFSS\log(dT)}{C_{\min}^2(\mcal X)\varepsilon_0^2}  + \sigma d_0\sqrt{T\log(\DiamX T)}}.
    \end{equation}
\end{restatable}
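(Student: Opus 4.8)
The plan is to split the argument into three parts that mirror the structure of Theorem~\ref{Thm: Regret Upper Bound of EMSC}, but upgrading the commit phase from greedy play to OFUL. First I would show that after $t_2$ exploration rounds the model-selection step returns a \emph{correct} subspace, i.e.\ $\theta_\star \in \widehat m$, with probability at least $1-1/T$. Second, conditioned on this event, I would bound the regret of the commit phase by running OFUL inside the $d_0$-dimensional (at most) subspace $\widehat m$. Third, I would combine the two bounds and absorb the (low-probability) failure event into an additive $O(\RewardMax T \cdot (1/T)) = O(\RewardMax)$ term, which is dominated by the other terms.

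For the first part, the key is Assumption~\ref{assp: well distinguised partition}: for any $m\in\mcal M$ with $\theta_\star\notin m$ we have $\|\theta_\star - \Pi_m(\theta_\star)\|^2 \ge \varepsilon_0^2/2$, as stated in the excerpt. Combining this separation with Lemma~\ref{Lem: Exploration Error} (the risk bound $\|\theta_\star - \widehat\theta_{t_1}\| = O(\sqrt{\sigma^2 d_0\log(d/\delta)/(C_{\min}(\mcal X)t_1)})$, which in turn rests on Proposition~\ref{prop: PropModelSelectionErrorHighProb} and Proposition~\ref{prop: Concentration of RIP condition with arbitrary distribution}) shows that once $t_2 = \Omega\!\big(\sigma^2\DiamX^2 d_0\log(dT)/(C_{\min}^2(\mcal X)\varepsilon_0^2)\big)$, the estimated risk is strictly below $\varepsilon_0^2/2$, so any wrong model incurs a least-squares objective strictly larger than the true one; hence $\widehat m \ni \theta_\star$. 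I would set $\delta = 1/T$ to get the claimed $1-1/T$ success probability, which dictates the stated scaling of $t_2$. One subtlety to handle carefully: the model-selection criterion \eqref{eq: model selction procedure} compares prediction errors $\|Y-\widehat{\boldsymbol f}_m\|^2$, not risks, so I must pass through the RIP/isometry inequality of Proposition~\ref{prop: Concentration of RIP condition with arbitrary distribution} on $\olsi{\mcal M}$ to convert a small prediction error on the selected model together with the $\varepsilon_0$-separation into the conclusion that the selected model must in fact contain $\theta_\star$.

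For the second part, once $\widehat m$ is a correct subspace of dimension $d_{\widehat m}\le d_0$, I run OFUL restricted to $\widehat m$: the effective dimension is $d_0$, the noise is $\sigma$-subgaussian, and by Assumption~\ref{assp: Cube-like bounded set of arms} the projected arms have norm $O(\sqrt{\DiamX})$ and rewards are bounded by $\RewardMax$. The standard OFUL regret bound \citep{Abbasi2011_OFUL} then gives $O(\sigma d_0 \sqrt{T\log(\DiamX T)})$ over the $T-t_2$ commit rounds. Adding the exploration cost, which is at most $\RewardMax t_2 = O\!\big(\RewardMax\sigma^2\DiamX^2 d_0\log(dT)/(C_{\min}^2(\mcal X)\varepsilon_0^2)\big)$, and the failure contribution $O(\RewardMax)$, yields the stated bound. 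The main obstacle is the first part: one must argue that \emph{every} incorrect model in the (exponentially large in $d_0$, but only polynomially large in $d$) collection $\mcal M$ is simultaneously rejected, which is exactly why the union bound over $M = O(d^{cd_0})$ models appears only logarithmically, via Proposition~\ref{prop: PropModelSelectionErrorHighProb} and Proposition~\ref{prop: Concentration of RIP condition with arbitrary distribution} on $\olsi{\mcal M}$; getting the $\log(dT)$ (rather than $\log M = O(d_0\log d)$, which is the same up to constants) and the correct dependence on $C_{\min}(\mcal X)$, $\DiamX$, and $\varepsilon_0$ in $t_2$ is the delicate bookkeeping step.
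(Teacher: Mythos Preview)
Your proposal is correct and leads to the stated bound, but the route you take to establish $\theta_\star\in\widehat m$ differs from the paper's. You invoke Lemma~\ref{Lem: Exploration Error} as a black box: since $\widehat\theta_{\widehat m}\in\widehat m$, if $\widehat m$ were an incorrect subspace then $\|\theta_\star-\widehat\theta_{\widehat m}\|\ge\|\theta_\star-\Pi_{\widehat m}(\theta_\star)\|\ge \varepsilon_0/\sqrt{2}$, which contradicts the risk bound once $t_2$ is large enough. The paper instead argues directly at the level of the least-squares objectives in~\eqref{eq: model selction procedure}: it shows that for any $\widehat m\ni\theta_\star$ the quantity $\|\boldsymbol f_\star-\widehat{\boldsymbol f}_{\widehat m}\|^2+2\langle\boldsymbol\eta,\boldsymbol f_\star-\widehat{\boldsymbol f}_{\widehat m}\rangle\le 0$, while for any $m\not\ni\theta_\star$ the same expression is lower bounded (via Proposition~\ref{prop: Concentration of RIP condition with arbitrary distribution} and the separation $\|\theta_\star-\Pi_m(\theta_\star)\|^2\ge\varepsilon_0^2/2$) by $\tfrac{t_2 C_{\min}(\mcal X)\varepsilon_0^2}{8}-O(\sigma^2 d_0+\sigma^2\log(M/\delta))$, which is nonnegative for the stated $t_2$. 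Your approach is more modular since it recycles Lemma~\ref{Lem: Exploration Error} wholesale; the paper's is more self-contained and makes the comparison of objectives explicit. The exploitation phase (OFUL on $\widehat m$), the $\RewardMax t_2$ exploration cost, and the $O(\RewardMax)$ failure contribution from $\delta=1/T$ are handled identically in both.
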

That is, if the separating constant $\varepsilon_0$ is known in advance and $\varepsilon_0 \geq T^{-1/4}$, then we can achieve $O(d_0\sqrt{T}\log(\DiamX T))$ regret upper bound. 

\medskip

\begin{remark}
A weakness of Algorithm \ref{alg:estc true model} is that 
without knowing that $\varepsilon_0 \geq T^{-1/4}$ is true a priori, there may be possible mis-specification error, which leads to  linear regret if one applies the algorithm naively.
On the other hand, Algorithm \ref{alg: Explore Model Selection then Commit} can always achieve regret $O(T^{2/3})$ in the worst case. As such, the following question arises: \textit{Does there exist an algorithm that, without the knowledge of $\varepsilon_0$, can achieve regret $O(\sqrt{T})$ whenever $\varepsilon_0 \geq T^{-1/4}$, but guarantees the worst-case regret of $O(T^{2/3})$?}
Toward answering this question, we propose a simple method which has $O(\sqrt{T})$ regret whenever the separating constant is large, and enjoys a worst-case regret guarantee of $O(T^{3/4})$ (slightly worse than $O(T^{2/3})$).
We refer the reader to Appendix \ref{appendix: Adapt with separating constant} for a detailed description of the algorithm, its regret bound and further discussion.
\end{remark}
%

\section{Experiment}
\label{sec: experiment}
To illustrate the performance of our algorithm, we conduct simulations where the entries of $\theta_\star$ satisfy three cases: sparsity, non-crossing partitions and non-nesting partitions. 
We refer readers to Appendix \ref{appendix: Structured Partition and Computation } for a more formal description of non-crossing partitions, non-nesting partitions, and why the interval partition (i.e., the partition structure equivalent to sparsity) is a strict subset of both non-crossing and non-nesting partitions.
Since sparsity is equivalent to a strict subset of non-crossing and non-nesting partitions, we compare our Algorithm \ref{alg: Explore Model Selection then Commit} with the sparse-bandit ESTC algorithm proposed in \cite{Hao2020_SparseLinBandit_PoorRegime} as a benchmark in all environments.
The set of arms $\mcal X$ is $\sqrt{d} \mbb S^{d-1}$, $\sigma = 0.1$, and $d=100$, $d_0=15$. The ground-truth sparse patterns, partitions and $\theta_\star$ are randomized before each simulation. 

The regret of both algorithms is shown in Figure \ref{fig: simulation result for Noncrossing partition}, which indicates that our algorithm performs competitively in the sparsity case and significantly outperforms the sparse-bandit algorithm in cases of non-crossing and non-nesting partitions. 
Due to space limitations, we refer the reader to Appendix \ref{appendix: experiment} for a detailed description of the experiments, including how we applied the sparse-bandit algorithm in the cases of non-crossing and non-nesting partitions, and how we ran Algorithm \ref{alg: Explore Model Selection then Commit} in the case of sparsity. 
Additionally, we explain how we exploited the particular structure of non-crossing and non-nesting partitions to enable efficient computation in Appendix \ref{appendix: experiment}.

\begin{figure}[h!] 
    \centering
    \includegraphics[width=1\linewidth]{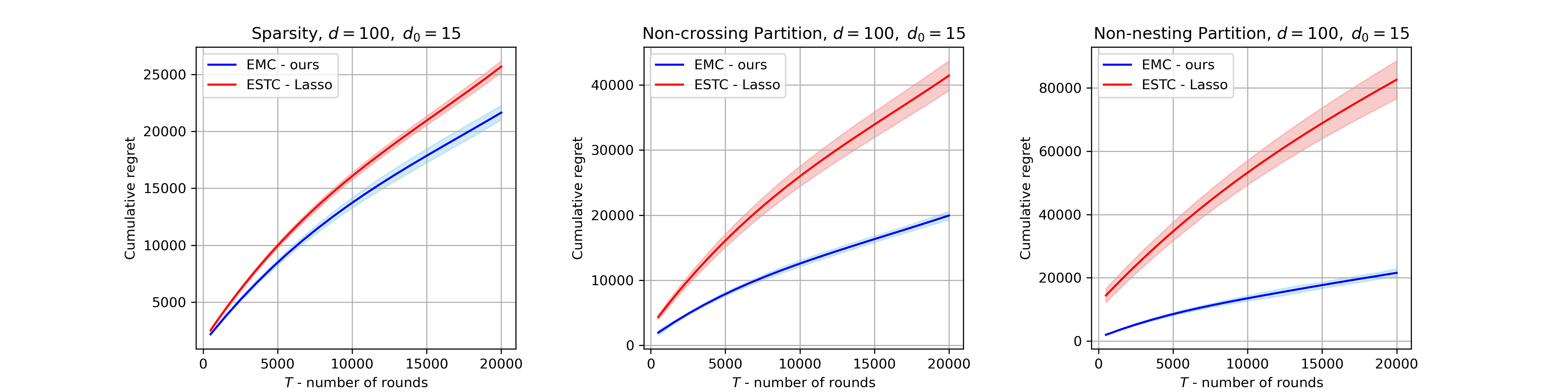}
    \caption{Regret of EMC (Algorithm \ref{alg: Explore Model Selection then Commit}) and of ESTC proposed in \cite{Hao2020_SparseLinBandit_PoorRegime}, in cases of sparsity, non-crossing partitions, and non-nesting partitions.}
    \label{fig: simulation result for Noncrossing partition}
\end{figure}


\section{Conclusion and Future Work}


In this paper, we study symmetric linear stochastic bandits in high dimensions, where the linear reward function is invariant with respect to some hidden subgroup $\mathcal{G} \leq \mcal S_d$. We first prove that no algorithm can gain any advantage solely by knowing $\mathcal{G} \leq S_d$.
Given this, we introduce a cardinality condition on the hidden subgroup $\mcal G$, allowing the learner to overcome the curse of dimensionality.
Under this condition, we propose novel model selection algorithms that achieve regrets of $\tilde O(\DimensionTrueFSS^{2/3}T^{2/3})$ and $\tilde O(d_0\sqrt{T})$ with an additional assumption on the well-separated partition.
For future work, we will explore convex relaxation techniques for efficient computation, leveraging specific structures of symmetries.

\newpage
\bibliographystyle{plainnat}
\bibliography{ref}

\newpage
\appendix
\renewcommand*\contentsname{Contents of Appendix}
\addtocontents{toc}{\protect\setcounter{tocdepth}{2}}
\doublespacing
\tableofcontents
\singlespacing

\section*{Additional notations}
For any subset $S \subseteq \mbb R^d$, and matrix $W \in \mbb R^{d\times d}$, we write
$
W(S) := \{Wx \mid x\in S\}.
$

\section{Impossibility Result of Learning with General Hidden Subgroups}
\label{appendix: Impossibility of learning with general unknown subgroups}

To prove Proposition 1, we need to prove Proposition~\ref{prop: From subgroup to partition} and use Proposition~\ref{prop: construction of fixed-point subspace}.
\medskip

\begin{proposition}\label{prop: From subgroup to partition}
    For each $\Gamma \leq \mcal S_d$ acting naturally on $[d]$, the orbit of $\Gamma$ on $[d]$ forms a unique partition of $[d]$. 
    Moreover, for each partition $\rho \in \mcal \PartitionSet_d$, there exists at least one $\Gamma \leq \mcal S_d$ such that its orbit on $[d]$ under natural action is exactly $\rho$.
\end{proposition}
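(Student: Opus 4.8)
The plan is to prove the two directions of Proposition~\ref{prop: From subgroup to partition} separately: first that every subgroup $\Gamma \leq \mcal S_d$ induces a partition of $[d]$ via its orbits, and second that every partition of $[d]$ is realized by at least one subgroup.

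For the first direction, I would invoke the standard fact that the action of any group $\Gamma$ on a set $[d]$ partitions that set into orbits. Concretely, define the relation $i \sim_\Gamma j$ iff there exists $g \in \Gamma$ with $g(i) = j$. I would verify this is an equivalence relation: reflexivity follows from the identity element $e \in \Gamma$ (since $e(i) = i$); symmetry follows because $\Gamma$ is closed under inverses (if $g(i)=j$ then $g^{-1}(j)=i$); transitivity follows from closure under composition (if $g(i)=j$ and $h(j)=k$ then $(h\circ g)(i)=k$, and $h\circ g \in \Gamma$). The equivalence classes are exactly the orbits, and they form a partition $\rho$ of $[d]$; uniqueness is immediate since the equivalence relation — hence its class structure — is determined by $\Gamma$.

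For the second direction, given an arbitrary partition $\rho = \{B_1, \dots, B_k\}$ of $[d]$, I would construct a subgroup whose orbits are precisely the blocks $B_1, \dots, B_k$. The natural choice is the \emph{Young subgroup} $\Gamma_\rho := \mcal S_{B_1} \times \mcal S_{B_2} \times \cdots \times \mcal S_{B_k}$, viewed as the subgroup of $\mcal S_d$ consisting of all permutations of $[d]$ that map each block $B_\ell$ to itself (permuting its elements arbitrarily within the block). I would check that $\Gamma_\rho$ is indeed a subgroup of $\mcal S_d$ (it is closed under composition and inversion, and contains the identity, since each factor is a group), and then that its orbit on $[d]$ equals $\rho$: any $i, j$ in the same block $B_\ell$ are connected by the transposition $(i\,j) \in \mcal S_{B_\ell} \leq \Gamma_\rho$, so they lie in the same orbit; conversely, any element of $\Gamma_\rho$ maps $B_\ell$ into $B_\ell$, so no orbit can cross block boundaries. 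Hence the orbit partition of $\Gamma_\rho$ is exactly $\rho$.

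I do not anticipate a serious obstacle here — this is essentially a careful bookkeeping argument combining the orbit–stabilizer philosophy with an explicit construction. The only mild subtlety is being precise about how $\mcal S_{B_\ell}$ embeds into $\mcal S_d$ (extend each permutation of $B_\ell$ by the identity on $[d] \setminus B_\ell$), and noting that the internal direct product of these commuting embedded factors is well-defined because the blocks are disjoint. The phrase "at least one" in the statement is deliberate, since other subgroups (e.g. a product of cyclic groups acting transitively on each block, when $|B_\ell| \geq 3$) may realize the same partition; the Young subgroup is simply the canonical and largest such witness.
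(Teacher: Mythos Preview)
Your proposal is correct and follows essentially the same approach as the paper: the paper also dispatches the first claim by appealing to the standard orbit-partition fact, and for the second claim constructs exactly the Young subgroup $\prod_i \mcal S_{\rho_i}$ (written there as $\Gamma_i = \{f:\rho_i\to\rho_i \mid f \text{ bijective}\}$ with $\Gamma = \prod_i \Gamma_i$). Your write-up is simply more explicit about verifying the equivalence-relation axioms and the orbit computation, but the underlying construction is identical.
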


\begin{proof}
    The first claim is obvious by the property of orbit, that is, orbit consists of non-empty and disjoint subsets of $[d]$, whose union is $[d]$.

    We prove the second claim.
    Let $\rho = \{\rho_i\}_{i\in I}$, where $I$ is the index of partition; note that $\rho_i$ is nonempty and mutually disjoint.
    Define a group as follows
    \begin{equation}
        \Gamma_i = \CurlyBr{f : \rho_i \rightarrow \rho_i \mid \text{$f$ is bijective}};
    \end{equation}
    It is clear that $\Gamma_i$ is a group under function composition.
    Now, define the product group
    \[
    \Gamma := \prod_{i\in I} \Gamma_i,
    \]
    and the action $\psi:\; \Gamma \times [d]   \rightarrow [d]$ such that 
    \begin{equation}
        \psi\RoundBr{(f_i)_{i\in I}, x} := f_j(x), \quad \text{for $ \rho_j\ni x$}.
    \end{equation}
    Therefore, it is clear that $(f_i)_{i\in I}$ is a bijection from $[d]$ onto itself, hence, $\Gamma$ is a subgroup of $\mcal S_d$.
\end{proof}

Let $E = (e_i)_{i\in [d]}$ be the standard basis. Given that group $\mcal G$ partition $[d]$ into $k$ disjoint orbits, $\mcal G$ also partition $E$ into $k$ disjoint orbits corresponding to its action $\phi$ on $\mbb R^d$, that is,  
\[E = \bigcup_{i=1}^k E_i.\]
Let $V_i = \mrm{Span} (E_i)$, then, one has the following.
\medskip

\begin{proposition}[\cite{Bodi2009_FixedPointSubspaceProperties}'s Theorem 14] \label{prop: construction of fixed-point subspace}
    \begin{equation}
        \FixedSSSubGroup{\mcal G}(\mbb R^d) =  \bigoplus_{i=1}^k  \FixedSSSubGroup{\mcal G}(V_i).
    \end{equation}
\end{proposition}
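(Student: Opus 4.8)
The statement to prove is Proposition~\ref{prop: construction of fixed-point subspace}, which asserts that the fixed-point subspace of $\mcal G$ on $\mbb R^d$ decomposes as the direct sum $\bigoplus_{i=1}^k \FixedSSSubGroup{\mcal G}(V_i)$, where the $V_i = \mrm{Span}(E_i)$ are the spans of the orbit blocks of the standard basis under $\mcal G$.

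\medskip

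\textbf{Plan of proof.} The plan is to exploit the fact that $\mbb R^d = \bigoplus_{i=1}^k V_i$ as an internal direct sum (since the $E_i$ partition the standard basis $E$), and that each $V_i$ is $\mcal G$-invariant under the permutation action $\phi$. First I would verify $\mcal G$-invariance: if $x \in V_i$, then $x = \sum_{e_j \in E_i} x_j e_j$, and for any $g \in \mcal G$, $g \cdot x$ permutes these basis vectors among themselves because $E_i$ is an orbit (hence closed under the action), so $g\cdot x \in V_i$. Thus the decomposition $\mbb R^d = \bigoplus_i V_i$ is a decomposition into $\mcal G$-invariant subspaces, and consequently any $x \in \mbb R^d$ has a unique expression $x = \sum_i x^{(i)}$ with $x^{(i)} \in V_i$, and $g \cdot x = \sum_i g \cdot x^{(i)}$ with $g \cdot x^{(i)} \in V_i$.

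\medskip

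\textbf{Key steps.} Given the invariant decomposition, I would prove the two inclusions. For $\supseteq$: if $x^{(i)} \in \FixedSSSubGroup{\mcal G}(V_i)$ for each $i$, then $g \cdot \sum_i x^{(i)} = \sum_i g\cdot x^{(i)} = \sum_i x^{(i)}$ for all $g \in \mcal G$, so $\sum_i x^{(i)} \in \FixedSSSubGroup{\mcal G}(\mbb R^d)$. For $\subseteq$: if $x \in \FixedSSSubGroup{\mcal G}(\mbb R^d)$, write $x = \sum_i x^{(i)}$ with $x^{(i)} \in V_i$; then for all $g$, $\sum_i x^{(i)} = x = g \cdot x = \sum_i g\cdot x^{(i)}$, and since $g \cdot x^{(i)} \in V_i$ and the sum is direct, uniqueness of the decomposition forces $g \cdot x^{(i)} = x^{(i)}$ for every $i$ and every $g$, i.e. $x^{(i)} \in \FixedSSSubGroup{\mcal G}(V_i)$. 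Combining both inclusions, together with directness of the sum $\sum_i V_i$ (which immediately gives directness of $\sum_i \FixedSSSubGroup{\mcal G}(V_i)$ since these are subspaces of the respective $V_i$), yields the claimed identity.

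\medskip

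\textbf{Main obstacle.} This proof is essentially routine once the right viewpoint — a $\mcal G$-invariant decomposition of $\mbb R^d$ into orbit-spans — is in place; the only mild subtlety is being careful that one really does need $E_i$ to be a full orbit (not merely a union of orbits) for the per-block pieces to be well-defined, and that the permutation action's "defectlessness" (it maps basis vectors to basis vectors) is what makes each $V_i$ invariant. There is no hard analytic content; the risk is only in stating the direct-sum bookkeeping cleanly. I would also note that an alternative, even shorter route is to observe that a permutation fixes a vector iff the vector is constant on each orbit block, and $\FixedSSSubGroup{\mcal G}(V_i)$ is exactly the line (or zero space) of vectors supported on $E_i$ that are constant there — but the invariant-decomposition argument above is the cleanest to write and is the one I would present.
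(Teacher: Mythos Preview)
Your proof is correct and is the standard invariant-subspace decomposition argument. Note, however, that the paper does not supply its own proof of this proposition: it is quoted directly as Theorem~14 of \cite{Bodi2009_FixedPointSubspaceProperties} and used as a black box, so there is no in-paper proof to compare against. One minor remark: in your ``main obstacle'' paragraph you write that one needs $E_i$ to be a full orbit rather than a union of orbits, but in fact the argument you give only requires each $V_i$ to be $\mcal G$-invariant, which already holds whenever $E_i$ is any $\mcal G$-stable set (i.e., a union of orbits); the single-orbit structure is what is used later, in Proposition~\ref{prop: dimension of fixed-point subspaces}, to compute $\dim \FixedSSSubGroup{\mcal G}(V_i)=1$.
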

We now state the proof of Proposition \ref{prop: Bijection between Subgroup and FixedSS} as a corollary of Proposition \ref{prop: From subgroup to partition} and Proposition \ref{prop: construction of fixed-point subspace}.
\medskip

\PropBijectionFixedSSPartition*
\begin{proof}
    \textbf{First}, we show that for each set partition of $[d]$, there exists a unique $H \in \FixedSS(\mbb R^d)$.
    This is straightforward due to the fact that, given a set partition $P$, the partition of basis $(E_i)_{i\in I}$ is unique by definition, so as $V_i = \mrm{span}(E_i)$.
    As a result, let $H =  \bigoplus_{i=1}^k \FixedSSSubGroup{\Gamma}(V_i)$, by Proposition \ref{prop: construction of fixed-point subspace}, $H \in \FixedSS(\mbb R^d)$.
    As $(V_i)_{i\in I}$ is unique, $H$ is unique.
    
    \textbf{Second}, we show that for each $H \in \FixedSS(\mbb R^d)$, there is a unique set partition/equivalent relation $P$.
    Denote $\overset{\mrm P}\sim$ as an equivalent relation under the set partition $P$.
    Suppose there are two different set partitions $P, \; Q$, there must be two set element $p\; q$ such that $p \overset{\mrm P}\sim q$ under $P$, and $p \overset{\mrm Q}\nsim q$ under $Q$.
    Denote $H_P,\; H_Q$ as the subspaces defined by $P$ and $Q$ respectively.
    As $p \overset{\mrm Q}\nsim q$, there must exist a point $x \in H_Q$ such that $x_p \neq x_q$. However, $x_p = x_q$ for all $x\in H_P$. 
    Hence, $H_P$ cannot be the same as $H_Q$.
\end{proof}

\medskip
\PropLowerBoundFullSetPartition*
\begin{proof}[Proof]
    Suppose there is a collection of model parameter $\Theta = \{-\varepsilon,\;\varepsilon\}^d$, for some $\varepsilon>0$.
    For each $\theta \in \Theta$, it is straightforward that since $\theta$ has two classes of indices, by Proposition \ref{prop: Bijection between Subgroup and FixedSS} and \ref{prop: dimension of fixed-point subspaces}, there must be a subgroup $\mcal G \leq \mcal S_d$ such that $\theta \in \FixedSSSubGroup{\mcal G}$ and $\dim\RoundBr{\FixedSSSubGroup{\mcal G}} = 2$. 

    Now, $\Theta$ can be used as a family of problem instances for minimax lower bound of linear bandit. 
    By Theorem 24.1 in \cite{Lattimore2020_BanditBook}, we have that with the choice of $\varepsilon = T^{-1/2}$, one has that
    \begin{equation}
       \mbf R_T \geq \frac{\exp{-2}}{\sqrt{8}}d\sqrt{T}.
    \end{equation}
\end{proof}
\section{The Case of Hidden Subgroups with Subexponential Size}
\label{appendix: Regret Reduction with subexponential-size collection}
\subsection{High Probability Prediction Error of Model Selection}
The proof of Proposition \ref{prop: PropModelSelectionErrorHighProb} uses the following concentration.
\medskip

\begin{proposition}[\cite{Giraud2021_HighDimStatBook}'s Theorem B.7]\label{prop: concentration of projection of noise}
    For some subspace $S \subset \mbb R^n$ of dimension $d_S$, with probability at least $1-\delta$, one has that
    \begin{equation}
       \left |\Norm{\Pi_{S}(\boldsymbol\eta)} - \sigma \sqrt{d_S} \right| \leq   \sigma\sqrt{2\log\RoundBr{\frac{1}{\delta}}}.
    \end{equation}
\end{proposition}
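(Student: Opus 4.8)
}
The plan is to reduce $\Norm{\Pi_S(\boldsymbol\eta)}$ to a scalar multiple of a $\chi$-distributed random variable and then invoke a classical Gaussian tail bound. First I would fix an orthonormal basis $u_1,\dots,u_{d_S}$ of the subspace $S$, so that $\Pi_S(\boldsymbol\eta)=\sum_{i=1}^{d_S}\AngleBr{\boldsymbol\eta,u_i}\,u_i$ and orthonormality gives $\Norm{\Pi_S(\boldsymbol\eta)}^2=\sum_{i=1}^{d_S}\AngleBr{\boldsymbol\eta,u_i}^2$. Since $\boldsymbol\eta\sim\mathcal N(0,\sigma^2 I_n)$ and the $u_i$ are orthonormal, the scalars $\AngleBr{\boldsymbol\eta,u_i}$ are i.i.d.\ $\mathcal N(0,\sigma^2)$, so $\Norm{\Pi_S(\boldsymbol\eta)}^2/\sigma^2\sim\chi^2_{d_S}$; equivalently $\Norm{\Pi_S(\boldsymbol\eta)}$ has the law of $\sigma\Norm{Z}$ for $Z\sim\mathcal N(0,I_{d_S})$. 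This reduces the claim to showing that $|\,\Norm{Z}-\sqrt{d_S}\,|\le\sqrt{2\log(1/\delta)}$ with probability at least $1-\delta$.

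For the second part I would use one of two standard routes. The quickest is to apply the Laurent--Massart deviation inequalities for a $\chi^2_{d_S}$ variable, $\mathbb{P}(\Norm{Z}^2\ge d_S+2\sqrt{d_S x}+2x)\le e^{-x}$ and $\mathbb{P}(\Norm{Z}^2\le d_S-2\sqrt{d_S x})\le e^{-x}$, together with the elementary inequality $d_S+2\sqrt{d_S x}+2x\le(\sqrt{d_S}+\sqrt{2x})^2$; taking square roots and a union bound over the two tails then gives $|\,\Norm{Z}-\sqrt{d_S}\,|\le\sqrt{2x}$ off an event of probability at most $2e^{-x}$, and setting $x=\log(1/\delta)$ and rescaling by $\sigma$ yields the bound. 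Alternatively I would invoke Gaussian concentration of measure for the $1$-Lipschitz map $z\mapsto\Norm{z}$, which gives $\mathbb{P}(|\,\Norm{Z}-\mathbb{E}\Norm{Z}\,|\ge t)\le 2e^{-t^2/2}$, and then recentre at $\sqrt{d_S}$ using Jensen ($\mathbb{E}\Norm{Z}\le\sqrt{d_S}$) and the Gaussian Poincar\'e inequality ($\mathrm{Var}(\Norm{Z})\le 1$, hence $\sqrt{d_S}-\mathbb{E}\Norm{Z}\le 1/\sqrt{d_S}$), an $O(1/\sqrt{d_S})$ gap that is absorbed into the constants.

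I do not expect any substantive obstacle: this is a textbook Gaussian concentration fact, which is precisely why the paper cites it from \cite{Giraud2021_HighDimStatBook} rather than reproving it. The only mildly delicate point is the constant bookkeeping --- matching exactly $\sqrt{2\log(1/\delta)}$ when centring at $\sigma\sqrt{d_S}$ rather than at the mean, and the factor $2$ contributed by the two-sided union bound. Since the downstream use in Proposition~\ref{prop: PropModelSelectionErrorHighProb} only needs the upper tail in the loose form $\Norm{\Pi_S(\boldsymbol\eta)}\le\sigma\sqrt{d_S}+\sigma\sqrt{C\log(1/\delta)}$, this constant-chasing is immaterial for the rest of the paper.
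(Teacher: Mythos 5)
Your proof is correct and follows the same route as the source the paper relies on: the paper does not reprove this statement but imports it verbatim from Giraud's Theorem B.7, whose argument is precisely your reduction of $\Pi_S(\boldsymbol\eta)$ to $\sigma\|Z\|$ with $Z\sim\mathcal N(0,I_{d_S})$ followed by Gaussian concentration of the $1$-Lipschitz norm map (equivalently, the Laurent--Massart $\chi^2$ tails). The one caveat you already flag --- the two-sided union bound costs a factor $2$ in the failure probability, so the literal constant $\sqrt{2\log(1/\delta)}$ strictly requires taking $x=\log(2/\delta)$ --- is immaterial here, since the downstream applications (e.g.\ Proposition~\ref{prop: PropModelSelectionErrorHighProb}) only use the upper tail up to unspecified constants.
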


\medskip

\PropModelSelectionErrorHighProb*
\begin{proof}
Define $\mcal E_1$ as the event such that for all $m\in \mcal M$,
\begin{equation}
    \Norm{\Pi_{S_m}(\boldsymbol\eta)} \leq \sigma \sqrt{d_m} + \sigma\sqrt{2\log\RoundBr{\frac{1}{\delta_0}}}.
\end{equation}
By Proposition \ref{prop: concentration of projection of noise} and union bound, $\mcal E_1$ occurs with probability $1-M\delta_0$.
For the rest of the proof, we assume $\mcal E_1$ occurs .

Let us denote $\boldsymbol{\widehat f} := \boldsymbol{\widehat f}_{\widehat m}$.
By the model selection procedure \eqref{eq: model selction procedure}, one has that
\begin{equation}
    \|Y - \boldsymbol{\widehat f} \|^2  \leq \|Y - \boldsymbol{\widehat f}_{m_\star} \|^2 .
\end{equation}
Also, as $Y = \boldsymbol{f}_\star +\boldsymbol\eta$, one has that, for some $K>1$
\begin{equation}\label{eq: 1st expression}
\begin{aligned}   
    \Norm{\boldsymbol{\widehat f} - \boldsymbol{f}_\star}^2 \leq \Norm{\boldsymbol{f}_\star - \boldsymbol{\widehat f}_{m_\star} }^2 + 2\AngleBr{\boldsymbol\eta, \boldsymbol{f}_\star - \boldsymbol{\widehat f}_{m_\star}} + 
    K\sigma^2 d_{\widehat m} -  2\AngleBr{\boldsymbol\eta, \boldsymbol{f}_\star - \boldsymbol{\widehat f}} - K\sigma^2 d_{\widehat m}.
\end{aligned}
\end{equation}

\textbf{First}, consider the term $\Norm{\boldsymbol{f}_\star - \boldsymbol{\widehat f}_{m_\star} }^2$. 
Note that, $\Pi_{S_{m_\star}}(\boldsymbol{f}_\star) = \boldsymbol{f}_\star$, as $\boldsymbol{f}_\star \in S_{m_\star}$.
We have that, 
\begin{equation}\label{eq: First term}
    \begin{aligned}
        \Norm{\boldsymbol{f}_\star - \boldsymbol{\widehat f}_{m_\star} }^2 &= \Norm{\boldsymbol{f}_\star - \Pi_{S_{m_\star}}(\boldsymbol{f}_\star  + \boldsymbol\eta) }^2 \\
        &= \Norm{\boldsymbol{f}_\star - \boldsymbol{f}_\star}^2 - 2\AngleBr{\Pi_{S_{m_\star}}(\boldsymbol\eta), \boldsymbol{f}_\star - \boldsymbol{f}_\star}  + \Norm{\Pi_{S_{m_\star}}( \boldsymbol\eta)}^2 \\
        &\leq \RoundBr{\sigma\sqrt{d_{m_\star}} + \sigma\sqrt{2\log\RoundBr{\frac{1}{\delta_0}}}}^2 \\
        &\leq 2\sigma^2 \DimensionTrueFSS + 4\sigma^2\log\RoundBr{\frac{1}{\delta_0}}.
    \end{aligned}
\end{equation}

\textbf{Second}, consider the term $\AngleBr{\boldsymbol\eta, \boldsymbol{f}_\star - \boldsymbol{\widehat f}_{m_\star}}$.
We have that, 
\begin{equation}\label{eq: Second term}
    \begin{aligned}
        \AngleBr{\boldsymbol\eta, \boldsymbol{f}_\star - \boldsymbol{\widehat f}_{m_\star}} &=  \AngleBr{\boldsymbol\eta, \boldsymbol{f}_\star - \Pi_{S_{m_\star}}(\boldsymbol{f}_\star +\boldsymbol\eta)} \\
        &= \AngleBr{\boldsymbol\eta,\boldsymbol{f}_\star  - \boldsymbol{f}_\star} - \Norm{\Pi_{S_{m_\star}}(\boldsymbol\eta)}^2 \leq 0.
    \end{aligned}
\end{equation}

\textbf{Third}, we need to control the magnitude of the term $2\AngleBr{\boldsymbol\eta, \boldsymbol{\widehat f} - \boldsymbol{f}_\star} - K\sigma^2 d_{\widehat m}$.
We show that this term can be bounded as 
\begin{equation*}
    2\AngleBr{\boldsymbol\eta, \boldsymbol{\widehat f} - \boldsymbol{f}_\star} - K\sigma^2 d_{\widehat m} \leq a^{-1}\Norm{\boldsymbol{\widehat f} - \boldsymbol{f}_\star}^2 + O(\log(1/\delta_0)),
\end{equation*}

with probability at least $1-M\delta_0$, for any constant $a>0$.
Denote $\AngleBr{\boldsymbol{f}_\star}$ be the line that is spanned by $\boldsymbol{f}_\star$.
For each $m \in \mcal M$, let us define the subspace $\Bar{S}_{m} = S_m + \AngleBr{\boldsymbol{f}_\star}$.
Define $\Tilde S_m \subset \Bar{S}_{m}$ as the subspace that is orthogonal to $\AngleBr{\boldsymbol{f}_\star}$, that is, one can write $\Bar{S}_{m} = \Tilde S_m \bigoplus \AngleBr{\boldsymbol{f}_\star}$.
By AM-GM inequality, for some $a>0$, we have that
\begin{equation}
    \begin{aligned}
       2 \AngleBr{\boldsymbol\eta, \boldsymbol{\widehat f} - \boldsymbol{f}_\star} =  2\AngleBr{\Pi_{\bar S_{\widehat m}}(\boldsymbol\eta), \boldsymbol{\widehat f} - \boldsymbol{f}_\star} &=  2\AngleBr{\sqrt{a}\cdot \Pi_{\bar S_{\widehat m}}(\boldsymbol\eta), \frac{1}{\sqrt{a}}\left(\boldsymbol{\widehat f} - \boldsymbol{f}_\star\right)}\\
        &\leq a \Norm{\Pi_{\bar S_{\widehat m}}(\boldsymbol\eta)}^2 + a^{-1}\Norm{\boldsymbol{\widehat f} - \boldsymbol{f}_\star}^2 \\
        &= a\sigma^2 V + a\sigma^2 U_{\hat{m}} + a^{-1}\Norm{\boldsymbol{\widehat f} - \boldsymbol{f}_\star}^2 ,
    \end{aligned}
\end{equation}
where $V = \sigma^{-2}\Norm{\Pi_{\AngleBr{\boldsymbol{f}_\star}}(\boldsymbol\eta)}^2$ and $U_{\hat{m}} = \sigma^{-2}\Norm{\Pi_{\Tilde S_{\hat{m}}} (\boldsymbol\eta)}^2$.
Note that, as $\mrm{dim}(\AngleBr{\boldsymbol{f}_\star}) = 1$, with probability at least $1 - \delta_0$, one has that
\begin{equation}\label{eq: V}
    V \leq 2 + 4\log(\delta^{-1}_0).
\end{equation}

Define the event $\mcal E_2$ as the event where the above inequality holds.

Therefore, our final task is to control the quantity $a U_{\hat{m}} - K d_{\widehat m}$.
Choose $a = (K+1)/2 > 1$, one has that
\[
a U_{\hat{m}} -K d_{\widehat m} = \frac{K+1}{2}\RoundBr{U_{\hat{m}} - \frac{2K}{K+1}d_{ \widehat m} } \leq \frac{K+1}{2} \max_{m\in \mcal M}\RoundBr{U_{m} - \frac{2K}{K+1}d_{ m} }.
\]

Now, directly control the magnitude of $U_{\hat{m}} - \frac{2K}{K+1}d_{\widehat m} $ is difficult, as $\widehat m$ depends on $\boldsymbol\eta$, and their distribution might be complicated.
Instead, we will control the maximum of the above quantity $U_{m} - \frac{2K}{K+1}d_{m}$ over all $m\in \mcal M$.
Since the dimension of $\Tilde S_{m}$ is at most $d_m$, similar as the event $\mcal E_1$, we define the event $\mcal E_3$ as for all $m \in \mcal M$, we have that
\begin{equation}
    \Norm{\Pi_{\Tilde S_{m}} (\boldsymbol\eta)} \leq \sigma\sqrt{d_m} + \sigma\sqrt{2\log\RoundBr{\frac{1}{\delta_0}}}.
\end{equation}
By Proposition \ref{prop: concentration of projection of noise} and the union bound, $\mcal E_3$ occurs with probability at least $1- M\delta_0$. 
We assume that  $\mcal E_3$ occurs, then for all $m\in \mcal M$,
\begin{equation}
\begin{aligned}
       U_{m} &\leq \RoundBr{\sqrt{d_m} + \sqrt{2\log\RoundBr{\frac{1}{\delta_0}}}}^2 \\
    &\leq  \frac{2K}{K+1} d_m  + \frac{4K}{K-1}\log\RoundBr{\frac{1}{\delta_0}}.
\end{aligned}
\end{equation}
Therefore, one has that for all $m \in \mcal M$,
\begin{equation}\label{eq: Um}
\begin{aligned}
    \frac{K+1}{2}\RoundBr{U_{\hat{m}} - \frac{2K}{K+1}d_{\widehat m}}  &\leq  \frac{K+1}{2} \max_{m\in \mcal M}\RoundBr{U_m - \frac{2K}{K+1} d_m } \\
    &\leq \frac{2K(K+1)}{K-1}\log\RoundBr{\frac{1}{\delta_0}}.
\end{aligned}
\end{equation}

\textbf{Putting things together,} let $\delta = (2M+1) \delta_0 $, we assure that event $\mcal E_1 \cap \mcal E_2 \cap \mcal E_3$ occurs with probability at least $1-\delta$.
Combining \eqref{eq: First term}, \eqref{eq: Second term}, \eqref{eq: V}, \eqref{eq: Um}, with \eqref{eq: 1st expression}, one has that
\begin{equation}
\begin{aligned}
        \frac{K-1}{K+1} \Norm{\boldsymbol{\widehat f} - \boldsymbol{f}_\star}^2 &\leq 2\sigma^2 \DimensionTrueFSS+  4\sigma^2 \log\RoundBr{\frac{2M+1}{\delta}} + K\sigma^2 \DimensionTrueFSS  \\
        &+ \sigma^2(K+1)
        + 2\sigma^2(K+1)\log\RoundBr{\frac{2M+1}{\delta}} + \sigma^2\frac{2K(K+1)}{K-1}\log\RoundBr{\frac{2M+1}{\delta}}.
\end{aligned}
\end{equation}

Note that, the dominating term in the above equation is $\log\RoundBr{\frac{M}{\delta}}$, we have that, there is some constant $C > 0$.
\begin{equation}
    \Norm{\boldsymbol{\widehat f} - \boldsymbol{f}_\star}^2 \leq C \sigma^2 \log\RoundBr{\frac{M}{\delta}}.
\end{equation}
\end{proof}
\medskip

\PropConcentrationRIP*
\begin{proof}
Our proof strategy is inspired by \cite{Blumensath2009_RIP_Concentration_UoS}. 

First, we compute the subgaussian norm for the normalised distribution along fews directions $\nu$. For any $x\in \mcal X$, we have that 
\begin{equation} \label{eq: Bounded normalised exploratory distrbution}
    \begin{aligned}
        \max_{\substack{\Norm{v} = 1, \\ v \in V^{1/2}(\olsi m),\;\olsi m \in \olsi{\mcal M}} } \AngleBr{V^{-1/2}x, v}^2  
        &= \max_{\substack{\Norm{V^{1/2} \omega} = 1, \\ \omega \in \olsi m,\;\olsi m \in \olsi{\mcal M}} } \AngleBr{V^{-1/2}x, V^{1/2} \omega}^2  \\
        &= \max_{\substack{\Norm{V^{1/2} \omega} = 1, \\ \omega \in \olsi m,\;\olsi m \in \olsi{\mcal M}} } \AngleBr{x,  \omega}^2 \\
        & \leq\max_{\olsi m \in \olsi{\mcal M}} \Norm{\Pi_{\olsi m}(x)}^2 C_{\min}^{-1}(\mcal X) \leq \frac{\DiamX}{C_{\min}(\mcal X)}.
    \end{aligned}
\end{equation}
This means that, the distribution $\AngleBr{V^{-1/2}x, v}^2$ has bounded support $[0,\; \frac{\DiamX}{C_{\min}}]$ for any above direction of $v$.

Next, for any $\theta\in \olsi m$, denote $\theta_V := V^{1/2}\theta $, note that the direction $v$ corresponding to $\theta_V$ satisfies \eqref{eq: Bounded normalised exploratory distrbution}.
Using Hoeffding's inequality for bounded random variable, there is an absolute constant $c_1 >0$ such that
\begin{equation}
\begin{aligned}
    &\mrm{Pr}\CurlyBr{\bigg|\frac{1}{n}\sum_{i=1}^n \AngleBr{V^{-1/2}x_i, \theta_V}^2 - \Norm{\theta_V}^2 \bigg| \geq \epsilon \Norm{\theta_V}^2} \leq 2\exp\RoundBr{-\frac{ n c_1 C_{\min}^2(\mcal X) \epsilon^2}{\DiamX^2}} \\
    \iff & \mrm{Pr}\CurlyBr{\bigg|\frac{1}{n} \Norm{XV^{-1/2} \theta_V}^2 - \Norm{\theta_V}^2 \bigg| \geq \epsilon \Norm{\theta_V}^2} \leq 2\exp\RoundBr{-\frac{ n c_1 C_{\min}^2(\mcal X) \epsilon^2}{\DiamX^2}}.
\end{aligned}
\end{equation}

Let $Z := \frac{1}{\sqrt{n}} XV^{-1/2}$. From Lemma 5.1 in \cite{Baraniuk2008_RandomMatrixConcentration}, we know that if the above inequality holds, then 
\begin{equation}
    \RoundBr{1-\delta_{\olsi M}\RoundBr{Z} \Norm{\theta_V}}   \leq  \Norm{Z \theta_V } \leq  \RoundBr{1+\delta_{\olsi M}\RoundBr{Z} \Norm{\theta_V}},
\end{equation}
holds with probability more than
\begin{equation*}
    1-2\RoundBr{\frac{12}{\delta_{\olsi M}\RoundBr{Z}}}^{2\DimensionTrueFSS}\exp\RoundBr{-\frac{c_1 C_{\min}^2(\mcal X) n \delta_{\olsi M}^2\RoundBr{Z}}{\DiamX^2}},
\end{equation*}
for any $\theta_V \in V^{1/2}(\olsi{m})$ in a subspace $\olsi{m} \in \olsi{M}$.
Take union bound for $\olsi M$ subspaces, and let
\begin{equation*}
    \begin{aligned}
    \delta &= 2\olsi{M}  \RoundBr{\frac{2}{\delta_{\olsi M}\RoundBr{Z}}}^{2\DimensionTrueFSS}\exp\RoundBr{-\frac{c_1 C_{\min}^2(\mcal X) n \delta_{\olsi M}^2\RoundBr{Z}}{\DiamX^2}}, \\
    \iff n  &=  O\RoundBr{\frac{\DiamX^2}{\delta_{\olsi M}^2\RoundBr{Z} C_{\min}^2(\mcal X) }\RoundBr{\log(2\olsi{M}) + \DimensionTrueFSS\log\RoundBr{\frac{1}{\delta_{\olsi M}\RoundBr{Z}}}  + \log(\delta^{-1})}}. \\
    \end{aligned}
\end{equation*}
Let $\delta_{\olsi M}^2\RoundBr{Z} = 1/2$. Then, Given $n  =  O\RoundBr{\frac{\DiamX^2}{C_{\min}^2(\mcal X) }\RoundBr{\log(2\olsi{M}) + \DimensionTrueFSS  + \log(\delta^{-1})}}$, for probability at least $1-\delta$, we have that
\begin{equation*}
\begin{aligned}
     &  \frac{1}{2} \Norm{V^{1/2}\theta}^2   &&\leq  \frac{1}{n}\Norm{XV^{-1/2} V^{1/2}\theta }^2, \\
\implies & \frac{1}{2} C_{\min}(\mcal X) \Norm{\theta}^2  &&\leq \frac{1}{n} \Norm{X\theta}^2.
\end{aligned}
\end{equation*}
Let $\theta: = \theta_1 - \theta_2$, for any $\theta_1 \in m$, $\theta_2 \in m'$, for any $m, m' \in \mcal M$. We conclude the proof.
\end{proof}

\subsection{Regret Upper bound}
\medskip

\LemExplorationError*

\begin{proof}
    First, let consider Proposition \ref{prop: Concentration of RIP condition with arbitrary distribution},
    and recall that $\log(M)$  and $\log(\olsi M)$ are both $O(\DimensionTrueFSS \log(d))$.
    Therefore, for the choice of $t_1 = \Omega(\DiamX^2\DimensionTrueFSS C_{\min}^{-2}(\mcal X) \log(d/\delta))$, with probability at least $1-\delta/2$, one has that
    \begin{equation}
        C_{\min}(\mcal X)\Norm{\theta_\star - \widehat \theta_{t_1}}^2 \leq \frac{2}{t_1} \Norm{X(\theta_\star - \widehat \theta_{t_1})}^2 
    \end{equation}

    Second, by Proposition \ref{prop: PropModelSelectionErrorHighProb}, one has that with probability at least $1-\delta/2$.
    \begin{equation}
        \Norm{X(\theta_\star - \widehat \theta_{t_1})}^2 \leq c_1 \sigma^2 \DimensionTrueFSS \log\RoundBr{\frac{d}{\delta
        }},
    \end{equation}
    for some constant $c_1>0$.
    
    Therefore, putting thing together, we have that with probability at least $1-\delta$, one has that
    \begin{equation}
        \Norm{\theta_\star - \widehat \theta_{t_1}} \leq c_2 \sqrt{\frac{\sigma^2 \DimensionTrueFSS \log\RoundBr{\frac{d}{\delta
        }}}{C_{\min}(\mcal X)t_1}},
    \end{equation}
    for some constant $c_2>0$.
\end{proof}

\medskip

\ThmRegretUpperBoundEMSC*
\begin{proof}
    Let define the pesudo regret as $\widehat{\mbf R}_T = \sum_{t=1}^T \AngleBr{x_\star - x_t,\theta_\star}$.
    Denote $\olsi m \in \olsi M$ be the subspace contains $\theta_\star - \widehat \theta_{t_1}$.
We start by simple regret decomposition as follows.

    \begin{equation}
        \begin{aligned}
            \widehat{\mbf R}_T &= \sum_{t=1}^T \AngleBr{\theta_\star, x_\star - x_t} = \sum_{t=1}^{t_1} \AngleBr{\theta_\star,  x_\star - x_t} + \sum_{t=t_1+1}^T  \AngleBr{\theta_\star,  x_\star - x_t}  \\
            &\leq \RewardMax t_1 +  \sum_{t=t_1+1}^T  \AngleBr{\theta_\star - \widehat \theta_{t_1},  x_\star - x_t} + \sum_{t=t_1+1}^T  \underbrace{\AngleBr{\widehat \theta_{t_1},  x_\star - x_t}}_{\leq 0} \\
            &\leq \RewardMax t_1 + \sum_{t=t_1+1}^T  \AngleBr{\theta_\star - \widehat \theta_{t_1},  x_\star - x_t} \\
            &= \RewardMax t_1 + \sum_{t=t_1+1}^T  \AngleBr{\theta_\star - \widehat \theta_{t_1}, \Pi_{\olsi m}( x_\star - x_t)} \;
            &&\Big[\text{As $\theta_\star - \widehat \theta_{t_1} \in \olsi m$},  \Big] \\
            &\leq \RewardMax t_1 + \sum_{t=t_1+1}^T  \Norm{\theta_\star - \widehat \theta_{t_1}} \Norm{\Pi_{\olsi m}( x_\star - x_t)}  \\
            &\leq \RewardMax t_1 + \sum_{t=t_1+1}^T  2\sqrt{ \DiamX } \Norm{\theta_\star - \widehat \theta_{t_1}}; \;
            &&\Big[\text{Since $ \Norm{\Pi_{\olsi m}( x_\star - x_t)} \leq 2\sqrt{\DiamX} $ } \Big].
        \end{aligned}
    \end{equation}

    Now, we invoke Lemma \ref{Lem: Exploration Error}. Let $\mcal E$ is the event in that the exploration error is bounded as in Lemma \ref{Lem: Exploration Error}, then there is an absolute constant $c_1>0$ such that, 
    \begin{equation}
        \begin{aligned}
            \mbf R_T &\leq \RewardMax t_1 + \mbb E \SquareBr{ \sum_{t=t_1+1}^T  2\sqrt{\DiamX}\Norm{\theta_\star - \widehat \theta_{t_1}}  \bigg| \mcal E} + T \mrm{Pr}(\mcal E) \RewardMax \\
            &\leq \RewardMax t_1 + c_1 \sqrt{ \DiamX}   \sqrt{\frac{\sigma^2 \DimensionTrueFSS \log(d/\delta)}{C_{\min}(\mcal X)t_1}} T + T\delta \RewardMax.
        \end{aligned}
    \end{equation}

    Let $\delta = 1/T$, and $t_1 = \RewardMax^{-\frac{2}{3}} \sigma^{\frac{2}{3}}  C_{\min}^{-\frac{1}{3}}(\mcal X) \DiamX^{\frac{1}{3}} \DimensionTrueFSS^{\frac{1}{3}}  T^{\frac{2}{3}} (\log(dT))^{\frac{1}{3}}$, one has that
    \begin{equation}
        \mbf R_T = O\RoundBr{\RewardMax^{\frac{1}{3}}  \sigma^{\frac{2}{3}}  C_{\min}^{-\frac{1}{3}}(\mcal X) \DiamX^{\frac{1}{3}} \DimensionTrueFSS^{\frac{1}{3}}  T^{\frac{2}{3}}(\log(dT))^{\frac{1}{3}}}
    \end{equation}
\end{proof}

\section{Improved Regret Bound}
\subsection{Improve Regret Bound with Well-Separated Partitions}
\label{appendix: Improved Regret Bound}

\medskip

\ThmRegretUpperBoundWellDistinguishedPartition*

\begin{proof}
    
Define the event $\mcal E$ as 
\begin{equation}
    \mcal E  = \CurlyBr{\Norm{Y-\Pi_{S_{\widehat m}}(Y)}^2 \leq \Norm{Y-\Pi_{S_{m}}(Y)}^2 \mid \theta_\star \in \widehat m , \; \theta_\star \notin m},
\end{equation}
The event $\mcal E$ is equivalent as
\begin{equation} \label{eq: condition for alg return true subspace}
    \begin{aligned}
    &        &\Norm{Y-\Pi_{S_{\widehat m}}(Y)}^2 &\leq \Norm{Y-\Pi_{S_{m}}(Y)}^2 \\
    &\iff    &\Norm{\boldsymbol{f}_\star + \boldsymbol\eta - \boldsymbol{\widehat f}_{\widehat m}}^2 &\leq \Norm{\boldsymbol{f}_\star + \boldsymbol\eta - \boldsymbol{\widehat f}_{ m}}^2 \\
    &\iff    &\Norm{\boldsymbol{f}_\star - \boldsymbol{\widehat f}_{\widehat m}}^2 + 2\AngleBr{\boldsymbol\eta, \boldsymbol{f}_\star - \boldsymbol{\widehat f}_{\widehat m}} &\leq \Norm{\boldsymbol{f}_\star - \boldsymbol{\widehat f}_{ m}}^2 + 2\AngleBr{\boldsymbol\eta, \boldsymbol{f}_\star - \boldsymbol{\widehat f}_{ m}}.\\
    \end{aligned}
\end{equation}

\textbf{First}, we upper the LHS of \eqref{eq: condition for alg return true subspace} with high probability.
\begin{equation}
    \begin{aligned}
        \Norm{\boldsymbol{f}_\star - \boldsymbol{f}_\star - \Pi_{S_{\widehat m}}(\boldsymbol\eta)}^2 + 2\AngleBr{\boldsymbol\eta, - \Pi_{S_{\widehat m}}(\boldsymbol\eta)} &=  \Norm{\Pi_{S_{\widehat m}}(\boldsymbol\eta)}^2 - 2\Norm{\Pi_{S_{\widehat m}}(\boldsymbol\eta)}^2  \leq 0 \quad \text{[as $\theta_\star \in \widehat m$]}
    \end{aligned}
\end{equation}
with probability at least $1-\delta$. The above inequality uses the union bound for all $m \in \mcal M$.

\textbf{Second}, we lower bound the RHS of \eqref{eq: condition for alg return true subspace}, $\Norm{\boldsymbol{f}_\star  - \boldsymbol{\widehat f}_m}^2 + 2\AngleBr{\boldsymbol\eta, \boldsymbol{f}_\star - \boldsymbol{\widehat f}_m}$,  with high probability.
Let $\olsi S_m = S_m \bigoplus \AngleBr{\boldsymbol{f}_\star}$, then $\AngleBr{\boldsymbol\eta, \boldsymbol{f}_\star - \boldsymbol{\widehat f}_m} = \AngleBr{\Pi_{\olsi S_m}(\boldsymbol\eta), \boldsymbol{f}_\star - \boldsymbol{\widehat f}_m} $. Therefore, we have that, with probability at least $1-\delta/3$.
\begin{equation}
    \begin{aligned}
        \Norm{\boldsymbol{f}_\star  - \boldsymbol{\widehat f}_m}^2 + 2\AngleBr{\Pi_{\olsi S_m}(\boldsymbol\eta), \boldsymbol{f}_\star - \boldsymbol{\widehat f}_m} 
        &\geq \Norm{\boldsymbol{f}_\star  - \boldsymbol{\widehat f}_m}^2 - 2\Norm{\Pi_{\olsi S_m}(\boldsymbol\eta)}^2 - \frac{1}{2} \Norm{\boldsymbol{f}_\star  - \boldsymbol{\widehat f}_m}^2 \\
        &\geq \frac{1}{2} \Norm{\boldsymbol{f}_\star  - \boldsymbol{\widehat f}_m}^2 - 4\RoundBr{\sigma^2(\DimensionTrueFSS+1) + \sigma^2(\log(3M\delta^{-1}))} \\
        &\geq \frac{1}{2} \Norm{\boldsymbol{f}_\star  - \boldsymbol{\widehat f}_m}^2 - 5\RoundBr{\sigma^2 \DimensionTrueFSS + \sigma^2(\log(3M\delta^{-1}))}, \\
    \end{aligned}
\end{equation}
\text{as $d_m \leq \DimensionTrueFSS$}. 

Also, we have that, with probability at least $1-\delta/3$
\begin{equation*}
\begin{aligned}
    \Norm{\boldsymbol{f}_\star  - \boldsymbol{\widehat f}_m}^2 &= \Norm{\boldsymbol{f}_\star - \Pi_{ S_m}(\boldsymbol{f}_\star) - \Pi_{ S_m}(\boldsymbol\eta)}^2 \\ 
    &\geq  \Norm{\boldsymbol{f}_\star  - \Pi_{S_m}(\boldsymbol{f}_\star)}^2 \\
\end{aligned}
\end{equation*}
Where the inequalities holds because $\Pi_{S_m}(\boldsymbol{f}_\star)$ is the projection of $\boldsymbol{f}_\star$ to $S_m$.
Therefore, we have that
\begin{equation}
    \Norm{\boldsymbol{f}_\star  - \boldsymbol{\widehat f}_m}^2 + 2\AngleBr{\boldsymbol\eta, \boldsymbol{f}_\star - \boldsymbol{\widehat f}_m} \geq 
    \frac{1}{2}\Norm{\boldsymbol{f}_\star - \Pi_{ S_m}(\boldsymbol{f}_\star)}^2 - 5 \RoundBr{\sigma^2 \DimensionTrueFSS + \sigma^2(\log(M\delta^{-1}))} .
\end{equation}
Now, by choosing $t_2 = \Omega\RoundBr{ C_{\min}^{-2}(\mcal X) \DiamX^2\log(M\delta^{-1})}$, by Proposition \ref{prop: Concentration of RIP condition with arbitrary distribution}, we have that with probability at least $1-\delta/3$,
\begin{equation}
    \begin{aligned}
        \Norm{\boldsymbol{f}_\star - \Pi_{ S_m}(\boldsymbol{f}_\star)}^2 = \Norm{X(\theta_\star-\Pi_{S_m}(\theta_\star ))}^2 \geq \frac{t_2}{2C_{\min}(\mcal X)}\Norm{\theta_\star - \Pi_{S_m}(\theta_\star )}^2 \geq \frac{t_2\varepsilon_0^2}{4C_{\min}(\mcal X)}
    \end{aligned}
\end{equation}
Therefore, the RHS of \eqref{eq: condition for alg return true subspace} is lower bounded as follows
\begin{equation}
    \Norm{\boldsymbol{f}_\star  - \boldsymbol{\widehat f}_m}^2 + 2\AngleBr{\boldsymbol\eta, \boldsymbol{f}_\star - \boldsymbol{\widehat f}_m} \geq \frac{t_2\varepsilon_0^2}{8 C_{\min}(\mcal X)} - 5 \RoundBr{\sigma^2 \DimensionTrueFSS + \sigma^2(\log(3M\delta^{-1}))} .
\end{equation}

Therefore, the sufficient condition for event $\mcal E$ holds with probability at least $1-\delta$ is that
\begin{equation}
    \begin{aligned}
        \frac{t_2\varepsilon_0^2}{8 C_{\min}(\mcal X)} - 5 \RoundBr{\sigma^2 \DimensionTrueFSS + \sigma^2(\log(M\delta^{-1}))}  \geq 0;
    \end{aligned}
\end{equation}
Note that as $M = O(\DimensionTrueFSS\log(d))$, for $\mcal E$ holds with probability at least $1-\delta$, it suffice to choose 
\begin{equation}
    t_2 = \Omega\RoundBr{\frac{\sigma^2 \DiamX^2 \DimensionTrueFSS\log(d\delta^{-1})}{C_{\min}^2(\mcal X)\varepsilon_0^2}}.
\end{equation}

The regret upper bound is an immediate consequence of the fact that Algorithm \ref{alg:estc true model} return the true subspace $\hat m \ni \theta_\star$, combined with the regret bound of OFUL in the exploitation phase and $\delta = 1/T$.
\end{proof}

\subsection{Adapting to Separating Constant $\varepsilon_0$}
\label{appendix: Adapt with separating constant}

As stated in Theorem \ref{Thm: Regret Upper Bound of LTMC}, if one knows in advance that the separating constant $\varepsilon_0 \geq T^{-1/4}$, then using Algorithm \ref{alg:estc true model} leads to $\sqrt{T}$ regret. The reason is that the learner can learn the true subspace $m_\star$ after $\sqrt{T}$ steps with no mis-specification errors. However, without knowing $\varepsilon_0 \geq T^{-1/4}$ a priori, one cannot guarantee to recover the true subspace. Hence, naively using Algorithm \ref{alg:estc true model}, which is not aware of potential mis-specification errors, leads to linear regret.
On the other hand, using Algorithm \ref{alg: Explore Model Selection then Commit} can achieve regret $T^{2/3}$ in the worst case (without the knowledge of $\varepsilon_0 \geq T^{-1/4}$). A question arises: \textit{does there exist an algorithm that, without the knowledge of $\varepsilon_0$, can achieve regret $\sqrt{T}$ whenever $\varepsilon_0 \geq T^{-1/4}$, but guarantee the worst-case regret as $T^{2/3}$?}

We note that the role of $\varepsilon_0$ is similar to the minimum signal in sparsity, and it is somewhat surprising that the question of adapting to unknown minimum signal has not been resolved in the literature of sparse linear bandits. Towards answering the question, we propose a simple method using adaptation to misspecified error in linear bandit \cite{Foster2020_AdaptWithMisspecifiedErrorLinBandit}, which has a $\sqrt{T}$ regret whenever the separating constant is large, and enjoys a worst-case regret guarantee of slightly worse $T^{3/4}$ regret.

The algorithm described in \ref{alg: adaptive estc true model} is a direct application of the algorithm proposed in \cite{Foster2020_AdaptWithMisspecifiedErrorLinBandit}, designed for adapting to misspecification errors in linear bandits. 
Particularly, the algorithm in \cite{Foster2020_AdaptWithMisspecifiedErrorLinBandit} can adapt to unknown misspecification errors and achieve a regret bound of $\tilde{O}(\DimensionTrueFSS\sqrt{T} + \epsilon_{\mrm{mis}} T)$, where $\epsilon_{\mrm{mis}}$ is the misspecification error.
At a high level, our algorithm exploring $\sqrt{T}$ rounds using exploratory distribution, which ensures that the misspecification error $\epsilon_{\mrm{mis}}$ of the chosen subspace $\hat{m}$ is at most $T^{-1/4}$.
Therefore, we can run multiple linear bandit algorithms using different levels of misspecification error. 
Particularly, we use a collection of $K = \lfloor \log(T) \rfloor$ base algorithms, where a base algorithm $k \in [K]$ is a linear bandit algorithm with misspecified level $\varepsilon_k = 2^{-k}$. 
Note that the base algorithm $K$ has the same order of regret $\sqrt{T}$ as a well-specified model. 
Therefore, in exploitation phase, one can guarantee that in the case of well-separated partitions where $\epsilon_{\text{mis}} = 0$, the algorithm can achieve a regret of $\DimensionTrueFSS\sqrt{T}$, while in the general case, the regret caused by misspecification error is at most $T^{3/4}\sqrt{\DimensionTrueFSS}$.

{\small
\begin{algorithm}[h!]
\caption{Adaptive algorithm}
\begin{algorithmic}[1]\label{alg: adaptive estc true model}    
    \STATE Input $T,\;\nu,\; t_3$.
    \FOR{$t= 1, \cdots, t_3$}
    \STATE Independently pull arm $x_t$ according to $\nu$ and receive a reward $y_t$.
    \ENDFOR
    \STATE $X\leftarrow [x_1,..., x_{t_1}]^\top$, $Y\leftarrow [y_t]_{t\in [t_1]}$. 
    \STATE Compute $\widehat m$ as \eqref{eq: model selction procedure}.
    \STATE Let $K = \floor{\log(T^{1/4})}$, $\mcal E = \CurlyBr{\varepsilon_k:= 2^{-k}, \; k\in [K]}$. \\
    \FOR{$t=t_2+1$ to $T$}
	\STATE Corralling $K$ base misspecified linear bandit algorithms \texttt{SquareCB.Lin+($\varepsilon_k$)} \cite{Foster2020_AdaptWithMisspecifiedErrorLinBandit} on $\widehat m$.
    \ENDFOR
\end{algorithmic}
\end{algorithm}
}

\medskip

\begin{corollary} \label{Corollary: Regret Upper Bound of LTMC}
    Suppose the Assumptions \ref{assp: sub-exponential models}, \ref{assp: Cube-like bounded set of arms} hold. Then, there exists an algorithm which achieves regret bound as follows:
    \begin{itemize}
        \item[(i)][\textbf{Well-separated partitions}] If $\varepsilon_0 \geq T^{-1/4}$, then
            $\mbf R_T = \Tilde O( \DimensionTrueFSS\sqrt{T})$.
        \item[(ii)][\textbf{Non-well-separated partitions}] If $\varepsilon_0 < T^{-1/4}$, then
            $\mbf R_T = \Tilde O( \DimensionTrueFSS\sqrt{T} + T^{\frac{3}{4}} \sqrt{\DimensionTrueFSS})$.
    \end{itemize}
\end{corollary}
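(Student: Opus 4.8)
I would prove Corollary~\ref{Corollary: Regret Upper Bound of LTMC} by analysing Algorithm~\ref{alg: adaptive estc true model} as an explore-then-commit scheme whose commit phase is the misspecification-adaptive linear bandit of \cite{Foster2020_AdaptWithMisspecifiedErrorLinBandit}, the only new ingredient being a bound on the bias that the data-dependent subspace $\widehat m$ introduces. First I would run the exploration phase for $t_3 = \widetilde{\Theta}(\sqrt{T})$ rounds with confidence $\delta = 1/T$; this is large enough to meet the hypothesis of Lemma~\ref{Lem: Exploration Error}, which then gives, with probability at least $1-1/T$, $\Norm{\theta_\star - \widehat\theta_{t_3}} = \widetilde{O}(T^{-1/4})$ (tracking $\sigma$, $C_{\min}(\mcal X)$, $\DiamX$, $\DimensionTrueFSS$ exactly as in Theorem~\ref{Thm: Regret Upper Bound of LTMC}). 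Because $\widehat\theta_{t_3}$ lies in the selected subspace $\widehat m$ and $\Pi_{\widehat m}(\theta_\star)$ is by definition the nearest point of $\widehat m$ to $\theta_\star$, this yields $\Norm{\theta_\star - \Pi_{\widehat m}(\theta_\star)} \le \Norm{\theta_\star - \widehat\theta_{t_3}} = \widetilde{O}(T^{-1/4})$. Consequently, playing a $\DimensionTrueFSS$-dimensional linear bandit on $\mcal X$ through the feature map $\Phi_{\widehat m}$ (coordinates of $\Pi_{\widehat m}(\cdot)$ in an orthonormal basis of $\widehat m$) is an $\epsilon_{\mathrm{mis}}$-misspecified linear bandit with $\epsilon_{\mathrm{mis}} = \sup_{x\in\mcal X}|\AngleBr{x,\theta_\star - \Pi_{\widehat m}(\theta_\star)}| \le (\sup_{x}\Norm{x})\Norm{\theta_\star - \Pi_{\widehat m}(\theta_\star)} = \widetilde{O}(T^{-1/4})$; in particular $\epsilon_{\mathrm{mis}} \le T^{-1/4} \le \varepsilon_K$ with $\varepsilon_K = 2^{-\lfloor \log T^{1/4}\rfloor}$, so the smallest base learner's guess is an a-priori upper bound on the true misspecification.

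Second, for part (i) I would additionally re-use the estimates inside the proof of Theorem~\ref{Thm: Regret Upper Bound of LTMC}: under Assumption~\ref{assp: well distinguised partition} with $\varepsilon_0 \ge T^{-1/4}$, well-separatedness forces $\Norm{\theta_\star - \Pi_m(\theta_\star)}^2 \ge \varepsilon_0^2/2$ for every $m \in \mcal M$ with $\theta_\star \notin m$, and combining this gap with the prediction-error bound of Proposition~\ref{prop: PropModelSelectionErrorHighProb} and the RIP estimate of Proposition~\ref{prop: Concentration of RIP condition with arbitrary distribution} shows that $t_3 = \widetilde{\Theta}(\sqrt T)$ rounds already make the model-selection step return $\widehat m \ni \theta_\star$ with probability $1-1/T$, that is, $\epsilon_{\mathrm{mis}} = 0$. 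Thus a single exploration length $t_3 = \widetilde{\Theta}(\sqrt T)$ guarantees $\epsilon_{\mathrm{mis}} = \widetilde{O}(T^{-1/4})$ unconditionally and $\epsilon_{\mathrm{mis}} = 0$ whenever $\varepsilon_0 \ge T^{-1/4}$.

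Third, I would feed $\widehat m$ into the commit phase, which corrals the base learners \texttt{SquareCB.Lin+($\varepsilon_k$)}, $k\in[K]$, exactly as in \cite{Foster2020_AdaptWithMisspecifiedErrorLinBandit}. Conditioned on the exploration phase's high-probability event, that procedure faces a $\DimensionTrueFSS$-dimensional linear bandit with true misspecification $\epsilon_{\mathrm{mis}} \le \varepsilon_K$, and hence incurs commit-phase regret $\widetilde{O}\big(\DimensionTrueFSS\sqrt{T} + \epsilon_{\mathrm{mis}}\sqrt{\DimensionTrueFSS}\,T\big)$, the corralling overhead being polylogarithmic in $T$ (hence in $K$). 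Substituting step two: if $\varepsilon_0 \ge T^{-1/4}$ then $\epsilon_{\mathrm{mis}} = 0$ and the commit regret is $\widetilde{O}(\DimensionTrueFSS\sqrt{T})$, while in general $\epsilon_{\mathrm{mis}} \le T^{-1/4}$ gives $\widetilde{O}(\DimensionTrueFSS\sqrt{T} + T^{3/4}\sqrt{\DimensionTrueFSS})$. Adding the trivial $O(\RewardMax\,t_3) = \widetilde{O}(\sqrt{T})$ regret of exploration and taking a union bound over the $O(1/T)$ failure events of Lemma~\ref{Lem: Exploration Error}, of the subspace-recovery event of Theorem~\ref{Thm: Regret Upper Bound of LTMC} (used only in case (i)), and of the corral, converts the expected-regret accounting into the two stated bounds.

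The step I expect to be the main obstacle is the third one: one must verify that the black-box regret guarantee of \cite{Foster2020_AdaptWithMisspecifiedErrorLinBandit} still applies when the feature representation $\Phi_{\widehat m}$ is itself random and produced by the first phase, so that the misspecification seen by the corral is the hindsight-deterministic quantity $\epsilon_{\mathrm{mis}}(\widehat m)$ and is not inflated by the adaptivity of the base learners, and that the corralling overhead is only polylogarithmic, so that it spoils neither the $\sqrt{T}$ rate in (i) nor the $T^{3/4}$ rate in (ii). A more routine subtlety is the norm-of-arms bookkeeping in $\epsilon_{\mathrm{mis}}$: one must confirm that forcing $\epsilon_{\mathrm{mis}} \le T^{-1/4}$ (rather than merely $\Norm{\theta_\star - \Pi_{\widehat m}(\theta_\star)} \le T^{-1/4}$) still costs only $t_3 = \widetilde{\Theta}(\sqrt{T})$ exploration rounds under Assumption~\ref{assp: Cube-like bounded set of arms}.
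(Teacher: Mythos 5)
Your proposal is correct and follows essentially the same route as the paper's proof sketch: explore for $t_3 = \widetilde\Theta(\sqrt{T})$ rounds, invoke Theorem~\ref{Thm: Regret Upper Bound of LTMC} to get exact subspace recovery (hence $\epsilon_{\mathrm{mis}}=0$) when $\varepsilon_0 \geq T^{-1/4}$, invoke Lemma~\ref{Lem: Exploration Error} together with the projection inequality $\Norm{\theta_\star - \Pi_{\widehat m}(\theta_\star)} \leq \Norm{\theta_\star - \widehat\theta_{t_3}}$ to get $\epsilon_{\mathrm{mis}} = \widetilde O(T^{-1/4})$ otherwise, and then plug into the $\widetilde O(\DimensionTrueFSS\sqrt{T} + \epsilon_{\mathrm{mis}} T \sqrt{\DimensionTrueFSS})$ guarantee of the corralled misspecified base learners. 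Your explicit handling of the step from $\Norm{\theta_\star - \widehat\theta_{t_3}}$ to the projection distance, and of the arm-norm bookkeeping via $\Norm{\Pi_{\olsi m}(x)}^2 \leq \DiamX$, is slightly more careful than the paper's sketch but does not change the argument.
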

\begin{proof}[Proof sketch]
    Denote $\epsilon_{\mrm{mis}} = \|\theta_\star - \Pi_{\widehat m}(\theta_\star) \|_2$ as the misspecification error. 
    With 
    \[t_3 = \Omega\RoundBr{\sigma^2 \DimensionTrueFSS\log(d\delta^{-1})\sqrt{T}}, \] 
    We can guarantee that, with probability at least $1-\delta$, if:
    \begin{itemize}
        \item[(i)] If $\varepsilon_0 > T^{-1/4}$, then by Theorem \ref{Thm: Regret Upper Bound of LTMC}, $\theta_\star \in \widehat m$, that is, $\epsilon_{\mrm{mis}} = 0$;
        \item[(ii)] If $\varepsilon_0 > T^{-1/4}$, then by Lemma \ref{Lem: Exploration Error}, we can bound $\epsilon_{\mrm{mis}}  \leq T^{-1/4}$.
    \end{itemize}

The regret of adaptive algorithm in \citep{Foster2020_AdaptWithMisspecifiedErrorLinBandit} is of the form $\tilde{O}(\DimensionTrueFSS\sqrt{T} + \epsilon_{\mrm{mis}} T\sqrt{\DimensionTrueFSS})$.
Let $\delta = 1/T$.
Consider case (i) where $\epsilon_{\mrm{mis}} = 0$, we have $\mbf R_T = \Tilde O(\DimensionTrueFSS\sqrt{T})$. 
Consider case (ii), where $\epsilon_{\mrm{mis}} = T^{-1/4}$,  we have $\mbf R_T = \Tilde O(\DimensionTrueFSS\sqrt{T} + T^{3/4}\sqrt{\DimensionTrueFSS})$.
\end{proof}

The result in Corollary \ref{Corollary: Regret Upper Bound of LTMC} is still sub-optimal in the worst case, as it can only achieve $O(T^{3/4})$ regret bound instead of $O(T^{2/3})$. 
We conjecture that new techniques are required to achieve order-optimal regret in both cases, and will continue to investigate this question in future works.


\section{On Collections of Partitions with Subexponential-Size}
\label{appendix: Discussion}

\subsection{Important Classes of Partitions with Subexponential-Size}
\label{appendix: Structured Partition and Computation }

In this section, we discuss several important classes of partitions which satisfy Assumption \ref{assp: sub-exponential models}.

Pattern-avoidance partitions is arguable the most important class of studied partition \cite{Mansour2012_SetPartitionBook}, in which, non-crossing partition is one the most studied.
\medskip

\begin{definition}[\textbf{Non-Crossing Partition}]
   Let $[d]$ admits a cylic order as $1 < 2<...< d$, and $d<1$. A non-crossing partition of $[d]$ is a partition such that for if $i,\; j$ in one block and $p,\;q$ in one block, then they are not arranged in the order $i<p<j<q$.  
\end{definition}
Similarly, we denote $\NCPartitionSet_{d}$, $\NCPartitionSet_{d,k}$, $\NCPartitionSet_{d,\leq k}$  as the set of all non-crossing partition of $[d]$, the set of all partition of $[d]$ with $k$ classes, and  the set of all partition of $[d]$ with at most $k$ classes.
We have the following fact (\cite{Mansour2012_SetPartitionBook}' section 3.2).
\begin{equation}
    |\NCPartitionSet_d| = \frac{1}{d+1} {2d \choose d}, \quad |\NCPartitionSet_{d,k}| = \frac{1}{d} {d \choose k} {d \choose k-1},
\end{equation}
which is the Catalan number and the Narayana number.
Note that, 
\begin{equation}
    \frac{1}{d} {d \choose k} {d \choose k-1} \leq \frac{1}{d}\RoundBr{\frac{e d}{k}}^{k}\RoundBr{\frac{e d}{k-1}}^{k-1} \leq \frac{1}{d}\RoundBr{\frac{e d}{k}}^{2k} .
\end{equation}
Therefore, non-crossing partition satisfied the cardinality restriction as Assumption \ref{assp: sub-exponential models}, that is, \\$\NCPartitionSet_{d,\leq\DimensionTrueFSS} \subset \SubsetPartitionSet_{d,\leq\DimensionTrueFSS}$.

Another important class of pattern-avoidence partitions is nonnesting partition \cite{Chen2006_NonCrossingNestingPartition}.
\medskip

\begin{definition}[\textbf{Non-Nesting Partition}]
   Let $[d]$ admits a cylic order as $1 < 2<...< d$, and $d<1$. A non-crossing partition of $[d]$ is a partition such that for if $i,\; j$ in one block and $p,\;q$ in one block, then they are not arranged in the order $i<p<q<j$.  
\end{definition}
Similarly, we denote $\NNPartitionSet_{d}$, $\NNPartitionSet_{d,k}$, $\NNPartitionSet_{d,\leq k}$  as the set of all non-crossing partition of $[d]$, the set of all partition of $[d]$ with $k$ classes, and  the set of all partition of $[d]$ with at most $k$ classes.
There is bijections between class of $\NNPartitionSet_{d}$ and $\NCPartitionSet_{d}$, non-nesting partitions also satisfy the sub-exponential constraint (Assumption \ref{assp: sub-exponential models}).

One special case of both non-crossing partitions and non-nesting partitions is interval partition, which has \textit{identical structure as sparsity}.
\medskip

\begin{definition}[\textbf{Interval Partition}]
    A set partition of $[d]$ is an interval partition or partition of interval if its parts are interval.
\end{definition}
We denote $\IntervalPartition_{d}$ as the collection of all interval partition of $d$, we have that $\IntervalPartition_{d} \subset \NCPartitionSet_{d} \subset \PartitionSet_d$, and $\IntervalPartition_{d} \subset \NNPartitionSet_{d} \subset \PartitionSet_d$.
\medskip

\begin{remark}\label{remark: sparsity and interval partition}
$\IntervalPartition_d$ admits a Boolean lattice of order $2^{d-1}$, making it equivalent to the sparsity structure in $d-1$ dimensions. Specifically, consider the set of entries of parameters $\varphi \in \mathbb{R}^{d}$ with a linear order, that is, $\varphi_1 < \varphi_2 < \dots < \varphi_d$.
Then define the variable $\theta \in \mathbb{R}^{d-1}$ such that $\theta_i = (\varphi_{i+1} - \varphi_i)$. 
Each interval partition on the entries of $\varphi$ will determine a unique sparse pattern of $\theta$.
In other words, symmetric linear bandit is strictly harder than sparse bandit and inherits all the computational complexity challenges of sparse linear bandit, including the \textit{NP-hardness} of computational complexity.
\end{remark}


Inspired by the literature on sparse linear regression, where one can relax solving exact sparse linear regression by using norm-$1$ minimization, also known as LASSO methods, we ask whether there is a convex relaxation for the case of non-crossing partitions or pattern-avoidance partitions in general.

\subsection{Practical Examples of Partitions with Subexponential-Size}
\label{appendix: Practical Examples of Partitions with Subexponential-Size}
\paragraph{General hidden symmetries.} 
Examples of hidden symmetry in reinforcement learning tasks can be found in robotic control \cite{Mahajan2017_UnknownSymmetryRobotic, Abreu2023_UnknownSymmetryRobotic}, where robot is initially designed symmetrical, but part of symmetry is destroyed by mechanical imperfection.
Further examples of hidden symmetry can be also found in the literature on multi-agent reinforcement learning with a large number of agents.
To avoid the curse of dimensionality, researchers often rely on the assumption of the existence of homogeneous agents \cite{Chen2021_MeanFieldGamesAllHomogeneous,Mondal2022_MeanFieldGamesClassHomogeneous}.
In the extreme case where all agents are homogeneous, such as in mean-field games, sample complexity becomes independent  of the number of agents \cite{Chen2021_MeanFieldGamesAllHomogeneous}.
However, in practice, agents can be clustered into different types \cite{Mondal2022_MeanFieldGamesClassHomogeneous}, and this information may not be known in advance to the learner (here symmetry occurs between different agents from the same type).

\paragraph{Non-crossing partitions.}
Sub-exponential size naturally appears when there is a hierarchical structure on the set $[d]$, and the partitioning needs to respect this hierarchical structure.
Particularly, let $T(d,d_0)$ be the set of ordered trees with $(d+1)$ nodes and $d_0$ internal nodes (i.e., nodes that are not the leaves). 
A partition that respects an ordered tree groups the children of the same node into a single equivalence class (for example, see Figure \ref{fig: partition with ordered tree}).
It is shown in \cite{Dershowitz1986_PartitionWithTree} that the cardinality of the set of partitions that respect ordered trees in $T(d,d_0)$ is sub-exponential. More precisely, it is $O(d^{2d_0})$.
Furthermore, there is a natural bijection between partitions that respect ordered trees in $T(d,d_0)$ and the set of non-crossing partitions $\mcal{NC}_{d,d_0}$ \cite{Dershowitz1986_PartitionWithTree}.

Recall the subcontractor example in the introduction. 
Here, after the company hires subcontractors $\{1, 4, 6\}$ to do the job, these subcontractors further break down the tasks into smaller subtasks and hire additional subcontractors $\{2, 3\}$, $\{5\}$, and $\{7, 8, 9\}$, respectively, to execute the subtasks.

\paragraph{Non-nesting partitions.} Besides non-crossing partitions, another sub-exponential-size class of partitions with practical relevance is non-nesting partitions. Consider the resource allocation task where there are $d$ upcoming tasks and $d_0$ machines. The job of the designer is to allocate these tasks to each machine.

Now, assume each task will appear in time $t_1 < t_2 < \dots < t_d$, but the exact time (the value of $t_i$) is unknown to the designer. Moreover, the cost of machine $k \in [d_0]$, given a subset of tasks $A_k$ (ordered according to execution time), is $c_k = t_{\max(A_k)} - t_{\min(A_k)}$.

The goal of the designer is to minimize the maximum cost of all machines:
\[
\min \max_{k\in [d_0]} c_k.
\]
To achieve this, the designer should avoid nesting allocations (i.e., searching among non-nesting partitions). In particular, assuming that if tasks at times $t_i$ and $t_j$ are assigned to machine $k$, (where $t_i < t_j$), and tasks at times $t_p$ and $t_q$ are assigned to machine $k'$ (where $t_p < t_q$), then it should not be the case that $t_i < t_p < t_q < t_j$. 
This is because the cost of machine $k$ would be significantly higher than that of machine $k'$, and the cost could be reduced by swapping task $t_q$ for machine $k$ with task $t_j$ for machine $k'$.

\subsection{Efficient Greedy Algorithm for Specific Classes of Partitions}
\label{appendix: Efficient Greedy Algorithm}
The model selection procedure in the exploration phase of Algorithms \ref{alg: Explore Model Selection then Commit} and \ref{alg:estc true model} requires finding the best subspace in the pool $m \in \mathcal{M}$ with respect to least square errors. 
In the worst case, the algorithm needs to solve $M$ linear regression approaches. 
While the exact computation of the best subspace $\hat{m}$ as in \eqref{eq: model selction procedure} is an NP-hard problem in general (since it contains interval partition as a subclass), we argue that an greedy algorithm can find the ground-truth subspace $m_\star$ in $O(nd^5)$ time complexity, given sufficient large number of samples.

The pseudo code of the greedy algorithm is given in Figure \ref{alg: greedy search on lattice}. 
Given that the set of partitions $\SubsetPartitionSet_d$ is equipped with a lattice structure, in which the finest partition is $(1|2|\dots|d)$ and the coarsest partition is $(1,2,\dots,d)$. 

The algorithm starts with the finest partition $\hat{\pi} = (1|2|\dots|d)$. In each iteration, the algorithm finds the finest coarsening of the current partition $\hat{\pi}$. In graph-theoretic terms, it finds all neighbors of $\hat{\pi}$ in the lattice that are coarsenings of $\hat{\pi}$. The operator for finding the finest coarsening of $\hat{\pi}$ is denoted as \texttt{Coarsen}($\hat{\pi}$), and it returns a collection of the finest coarsened partitions.
Next, the algorithm finds the partition that minimizes the prediction error $\|Y - \Pi_{S_m}(Y)\|_2^2$ among the current coarsening collection. At the end of each iteration, as the number of classes in $\hat{\pi}$ is reduced by one, the dimension variable is also reduced by $1$.
The while loop stops when the dimension equals $d_0$.

Since the algorithm only optimizes locally within the current coarsening collection, it exhibits a behavior similar to a greedy algorithm. 
We note that for non-crossing and non-nesting partitions, the cardinality of the coarsening collection at any level of the lattice is at most $d^2$.
Therefore, assuming that creating a finest coarsening partition take $O(d)$ operator, and solving least square takes $O(nd^2)$, the algorithms time complexity is $O(n d^5)$.


{\small
\begin{algorithm}[h!]
\caption{Greedily Search within Lattice}
\begin{algorithmic}[1]\label{alg: adaptive estc true model}    
    \STATE Input: Compact representation of $\mathcal M$, design matrix $X$, reward vector $Y$.
    \STATE Initialise $\hat \pi = (1|2|3|4...|d)$, $\mrm{dimension} = d$. 
    \WHILE{$\mrm{dimension} > d_0$}
        \STATE Collection $=$ \texttt{Coarsen}($\hat \pi$).
        \STATE $\hat m = \argmin_{m \in \BijectionParFixSS(\mrm{Collection})} \|Y - \Pi_{S_m}(Y)\|_2^2$.
        \STATE $\hat \pi = \BijectionParFixSS^{-1}(\hat m)$.
        \STATE $\mrm{dimension} = \mrm{dimension} - 1$.
    \ENDWHILE
    \STATE $\hat \theta  = \argmin_{\theta \in \hat m} \|Y - X\theta\|_2^2$.
    \STATE Return $\hat m,\; \hat \theta$.
\end{algorithmic}
\label{alg: greedy search on lattice}
\end{algorithm}
}


\section{Experiment details}
\label{appendix: experiment}
We conduct simulations where the entries of $\theta_\star$ satisfy non-crossing partition constraints. The set of arms $\mcal X$ is $\sqrt{d}\mbb S^{d-1}$, $\sigma = 0.1$, and $(d,d_0)\in \{(40,4), (80,10), (100,15)\}$. 
We let exploratory distribution $\nu$ be the uniform distribution on the unit sphere. 
The ground-truth partition $\PartitionByGroup{\mcal G}$ and $\theta_\star$ are randomized before each simulation.

To run ESTC-Lasso algorithm \cite{Hao2020_SparseLinBandit_PoorRegime}, we introduce an auxiliary sparse vector $\varphi$ corresponding to $\theta_\star$, whose entries are defined as $\varphi_i = \theta_{i+1} - \theta_i$, and $\varphi_d = \theta_d$. 
We apply Lasso regression for $\varphi_\star$, get the estimate $\hat \varphi$, then convert back to $\hat \theta$ using the map that transforms sparse vector to interval-partition vector (inversion of the map we defined above). 

Regarding implementing our algorithm, we use greedy Algorithm \ref{alg: greedy search on lattice}  as introduced in Appendix \ref{appendix: Efficient Greedy Algorithm}, to solve the optimisation in equation \eqref{eq: f_m, theta_m}, \eqref{eq: model selction procedure}, and its complexity is $O(t_1 d^5)$. 
It is shown in the simulation result (Figure \ref{fig: simulation result for Noncrossing partition}, \ref{fig: simulation result d40}, \ref{fig: simulation result d80}), the greedy algorithm achieves small risk error and consequently leads to small regret.
Code is available at: \\ \url{https://github.com/NamTranKekL/Symmetric-Linear-Bandit-with-Hidden-Symmetry.git}.

\begin{figure}[h!] 
    \centering
    \includegraphics[width=1\linewidth]{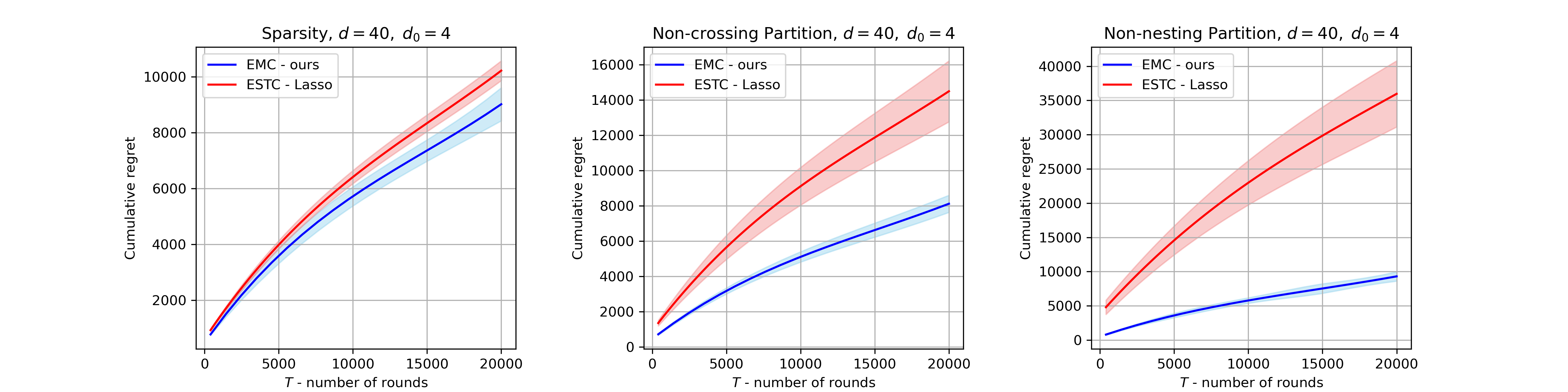}
    \caption{Regret of EMC (Algorithm \ref{alg: Explore Model Selection then Commit}) and of ESTC proposed in \cite{Hao2020_SparseLinBandit_PoorRegime}, in cases of sparsity, non-crossing partitions, and non-nesting partitions, with $d = 40,\; d_0 = 4$.}
    \label{fig: simulation result d40}
\end{figure}

\begin{figure}[h!] 
    \centering
    \includegraphics[width=1\linewidth]{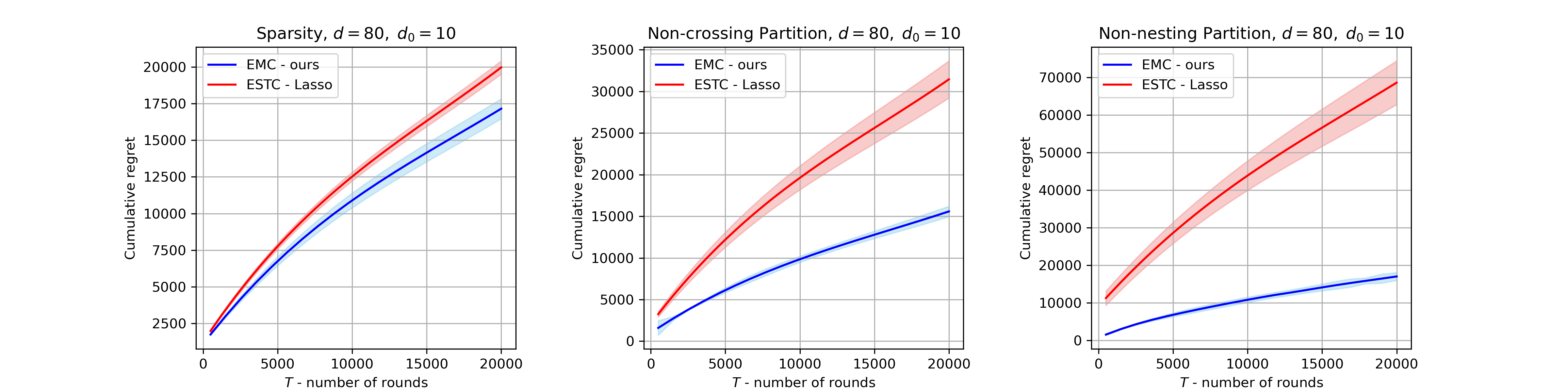}
    \caption{Regret of EMC (Algorithm \ref{alg: Explore Model Selection then Commit}) and of ESTC proposed in \cite{Hao2020_SparseLinBandit_PoorRegime}, in cases of sparsity, non-crossing partitions, and non-nesting partitions, with $d = 80,\; d_0 = 10$.}
    \label{fig: simulation result d80}
\end{figure}

\section{Extended Related Work}
\label{appendix: Extended Related Work}
\textbf{Sparse linear bandits.} As we will explain in Section \ref{sec: Regret Analysis of EMC algorithm}, sparsity is equivalent to a subset symmetry structures, and thus, can be seen as a special case of our setting. As such, we first review the literature of sparsity.
Sparse linear bandits were first investigated in \cite{Yasin_OnlineToConfidentSetSparseBandit}, where the authors achieve a regret of $\tilde{O}(\sqrt{dsT})$, with $\tilde{O}$ disregarding the logarithmic factor, and $s$ representing the sparsity level, and $T$ is the time horizon. 
Without additional assumptions on the arm set and feature distribution, the lower bound for regret in the sparsity case is $\Omega(\sqrt{dsT})$ \cite{Lattimore2020_BanditBook}. 
Consequently, the contextual setting has recently gained popularity in the sparsity literature, where additional assumptions are made regarding the context distribution and set of arms. 
With this assumption, it can be shown that one can achieve regret of the form $\tilde{O}(\tau s\sqrt{T})$, where $\tau$ is a problem-dependent constant that may have a complex form and varies from paper to paper \cite{Kim2019_SparseLinBanditcompatibility, Oh2020_AgnositcSparseLinearBandit}.
Apart from the contextual assumption, to avoid polynomial dependence on $T$ in the regret bound, assumptions are required for the set of arms \cite{Lattimore2015_SparseLinBanditCube, Carpentier2012_SparseLinBanditonSphere, Hao2020_SparseLinBandit_PoorRegime}.
Recently, \cite{Hao2020_SparseLinBandit_PoorRegime} offers a unified perspective on the assumption regarding the set of arms by assuming the existence of an exploratory distribution on the set of arms. With this assumption, the authors propose an Explore then Commit style strategy that achieves $\Tilde O(s^{\frac{2}{3}}T^{\frac{2}{3}})$, nearly matching the lower bound $\Omega(s^{\frac{2}{3}}T^{\frac{2}{3}})$ in the poor data regime \cite{Kyoungseok2022_PopArtSparseBandit}.
As the sparsity structure can be reduced to a subset of the symmetry structure, all the lower bounds for sparse problems apply to (unknown) symmetric problems.

\textbf{Model selection.} Our problem is also closely related to model selection in linear bandits, as the learner can collect potential candidates for the unknown symmetry.
Bandit model selection involves the problem where there is a collection of $M$ base algorithms (with unknown performance guarantees) and a master algorithm, aiming to perform as well as the best base algorithm.
The majority of the literature assumes the black-box collection of models $M$ base algorithms and employs a variant of online mirror descent to select the recommendations of the base agent \cite{Agarwal2016_BanditModelSelectionBlackBox, Pacchiano2020_BanditModelSelectionBlackBox, Pacchiano2023_BanditModelSelectionBlackBox}.
Due to the black-box nature, the regret guarantee bound depends on $\text{poly}(M)$.
There is a growing literature on model selection in stochastic linear bandits, where there is a collection of $M$ features, and linear bandits running with these features serve as base algorithms.
By exploiting the fact that the data can be shared across all the base algorithms, the dependence of regret in terms of the number of models can be reduced to $\log(M)$.
In particular, \cite{Kassraie2023_FeatureSelection_LinBan_LogM_Sparsity} propose a method that concatenates all $M$ features of dimension $d$ into one feature of dimension $Md$, and then runs a group-Lasso bandit algorithm on top of this concatenated feature space, using the Lasso estimation as a aggregation of models. 
Their algorithm achieves a regret bound of $O(T^{\frac{3}{4}} \sqrt{\log(M)})$ under the assumption that the Euclidean norm of the concatenated feature is bounded by a constant.
However, in our case, the Euclidean norm of concatenated feature can be as large as $\sqrt{M}$, which leads to $\sqrt{M}$ multiplicative factor in regret.
Besides, \cite{Moradipari2021_LinearBanditRepresentationLearning} uses the online aggregation oracle approach, and able to obtain regret as $O(\sqrt{KdT\log(M)})$, where $K$ is the number of arms.
In contrast, we use different algorithmic mechanism than aggregation of models. In particular, we explicitly exploiting the structure of the model class as a collection of subspaces
and invoking results from Gaussian model selection \citep{Giraud2021_HighDimStatBook} and dimension reduction on the union of subspaces \citep{Blumensath2009_RIP_Concentration_UoS}.
With this technique, we are able to achieve $O(T^{\frac{2}{3}}\log(M))$, which is rate-optimal in the data-poor regime, has logarithmic dependence on $M$ without strong assumptions on the norm of concatenated features, and is independent of the number of arms $K$.
A special case of feature selection where one can achieve a very tight regret compared to the best model is the nested feature class \cite{Foster2019_ModelSelectionLinBanditNestedClass, Ghosh2020_ModelSelectionLinBanditNestedClass}.
In particular, in the nested feature class where dimensions range from $\{1,\dots, d\}$, and $d_{m_\star} < d$ represents the realizable feature of the smallest dimension, the regret bound can be $\tilde{O}(\sqrt{T d_{m_\star}})$ as shown in \cite{Ghosh2020_ModelSelectionLinBanditNestedClass}.
While the regret bound nearly matches the regret of the best model in the nested feature class, the assumption on nested features cannot be applied in our setting.

\textbf{Symmetry in online learning.}  The notion of symmetry in Markov Decision Making dates back to works such as \cite{Givan2003_MDPBisimulation, Ravindran2004_MDPHomomorphism}. 
Generally, the reward function and probability transition are preserved under an action of a group on the state-action space. 
Exploiting known symmetry has been shown to help achieve better performance empirically \cite{Elise20_MDPHHomomorphism, Elise2020_MDPHomomorphismPlaning} or tighter regret bounds theoretically \cite{Tran2022_ILB}.
However, all these works requires knowledge of symmetry group, while setting consider unknown subgroup which may be considerably harder.
Unknown symmetry on the context or state space has been studied by few authors, with the term context-lumpable bandit \cite{Lee2023_Context_lumpable_bandit}, meaning that the set of contexts can be partitioned into classes of similar contexts.
It is important to note that the symmetry group acts differently on the context space and the action space.
As we shall explain in detail in Section \ref{sec: Partition and FixedSS, imposibility}, while one can achieve a reduction in terms of regret in the case of unknown symmetry acting on context spaces \cite{Lee2023_Context_lumpable_bandit}, this is not the case when the symmetry group acts on the action space.
Particularly, we show that without any extra information on the partition structure of similar classes of arms, no algorithm can achieve any reduction in terms of regret.

The work closest to ours is \cite{Pesquerel2021_MABsimilarArmsConstraintPartition}, where the authors consider the setting of a $K$-armed bandit, where the set of arms can be partitioned into groups with similar mean rewards, such that each group has at least $q > 2$ arms. With the constrained partition, the instance-dependent regret bounds are shown asymptotically to be of order $O\left((\nicefrac{K}{q}) \log T\right)$.
Comparing to \cite{Pesquerel2021_MABsimilarArmsConstraintPartition}, we study the setting of stochastic linear bandits with similar arms, in which the (unknown) symmetry and linearity structure may intertwine, making the problem more sophisticated. We also impose different constraints on the way one partitions the set of arms, which is more natural in the setting of linear bandits with infinite arms.
As a result, we argue that the technique used in \cite{Pesquerel2021_MABsimilarArmsConstraintPartition} cannot be applied in our setting.
To clarify this, let us first review the algorithmic technique from \cite{Pesquerel2021_MABsimilarArmsConstraintPartition}:
The algorithm in \cite{Pesquerel2021_MABsimilarArmsConstraintPartition} assumes there is an equivalence among the parameters $\theta$, and that the set of arms $\mcal{X}$ is a simplex. At each round $t$, given an estimation $\hat \theta_t$, the algorithm maintains a sorted list of indices in $[d]$ that follows the ascending order of the magnitude of $\hat \theta_i$.
The algorithm then uses the sorted list $(\hat \theta_i)_{i\in [d]}$ to choose arm $x$ accordingly. 
The key assumption here is that, since the set of arms $\mcal{X}$ is a simplex, we can estimate each $\theta_i$ independently. This implies that the list $(\hat\theta_i)_{i \in [d]}$ should respect the true order of the list $(\theta_i)_{i \in [d]}$ when there are a sufficiently large number of samples.
Unfortunately, this is typically not the case in linear bandits where $\mcal{X}$ has a more general shape. 
In linear bandits, there can be correlations between the estimates $\hat \theta_i$ and $\hat \theta_j$ for any $i, j \in [d]$. Hence, one should not expect that $(\hat \theta_i)_{i \in [d]}$ will maintain the same order as $(\theta_i)_{i \in [d]}$. In other words, the correlations among the estimates $\{\hat \theta_1, \ldots, \hat \theta_d\}$ may destroy the original order in $\{\theta_1, \ldots, \theta_d\}$.
In fact, we can only guarantee the risk error of estimation $\theta$, i.e., $\|\hat \theta - \theta_\star \|$ is small, but not necessarily the order of the indices in $\theta$.
Therefore, the technique used in \cite{Pesquerel2021_MABsimilarArmsConstraintPartition} cannot be directly applied to linear bandits in its current form.



\end{document}